\newcommand{\mcD}{\mathcal{D}}
\newcommand{\calD}{\mathcal{D}}
\newcommand{\mcU}{\mathcal{U}}
\newcommand{\argmin}{\mathop{\mathrm{argmin}}}
\newcommand{\bcm}{\mathrm{BCM}}
\newcommand{\mix}{\mathrm{mix}}
\newcommand{\prd}{\mathrm{prod}}
\title{Calibrated One Round Federated Learning with Bayesian Inference in the Predictive Space}
\author {   
    Mohsin Hasan\textsuperscript{\rm 1,2},
    Guojun Zhang\textsuperscript{\rm 3},
    Kaiyang Guo\textsuperscript{\rm 3},
    Xi Chen\textsuperscript{\rm 3},
    Pascal Poupart\textsuperscript{\rm 1,2}
}
\begin{document}

\maketitle
\begin{abstract}

Federated Learning (FL) involves training a model over a dataset distributed among clients, with the constraint that each client’s dataset is localized and possibly heterogeneous. In FL, small and noisy datasets are common, highlighting the need for well-calibrated models that represent the uncertainty of predictions. 
The closest FL techniques to achieving such goals are the Bayesian FL methods which collect parameter samples from local posteriors, and aggregate them to approximate the global posterior.
To improve scalability for larger models, one common Bayesian approach is to approximate the global predictive posterior by multiplying local predictive posteriors. In this work, we demonstrate that this method gives \emph{systematically overconfident} predictions, and we remedy this by proposing \emph{$\beta$-Predictive Bayes}, a Bayesian FL algorithm that interpolates between a mixture and product of the predictive posteriors, using a tunable parameter $\beta$. This parameter is tuned to improve the global ensemble’s calibration, before it is distilled to a single model. Our method is evaluated on a variety of regression and classification datasets to demonstrate its superiority in calibration to other baselines, even as data heterogeneity increases. Code available at  \url{https://github.com/hasanmohsin/betaPredBayesFL}. 
\end{abstract}

\section{Introduction}

Federated learning (FL) is a machine learning paradigm that trains a statistical model using decentralized data stored on client devices, with the constraint that client data is kept local~\cite{mcmahan17a}. FL has found use in smartphone applications, as well as fields such as healthcare and finance due to the abundance of use-cases where sensitive training data is owned by separate entities \cite{zheng2022applications}. 

The workflow of a typical FL algorithm involves the local training of models on each client, followed by the communication and aggregation of these into a single global model on a central server. Many FL techniques alternate between these two steps until some notion of convergence is reached.

For the purpose of this work, we are concerned with the design of FL algorithms with three key metrics in mind:

\begin{enumerate}
    \item \textbf{Communication Cost}: the transmission of models between clients and servers can be expensive, especially when each model has numerous parameters (e.g.~neural networks). Cost effective FL techniques therefore aim to maximize the amount of local computation at each client, while reducing the rounds of communication. 

    \item \textbf{Performance with Heterogeneous Data}: clients may have different data distributions from each other. This causes the local client models to drift apart, presenting issues when aggregating them into a global model \cite{zhao2018federated}. In particular, when client datasets are heavily heterogeneous, global models tend to perform poorly on clients' local datasets.  
    
    \item \textbf{Calibration}: Clients' data may have too few training points, or too much variance. Therefore, a valuable goal in FL is to produce models that are \textbf{well-calibrated}, or in other words: make probabilistic predictions with accurate uncertainty estimates.
\end{enumerate}

Many techniques frame FL as a distributed optimization problem. For these, there is typically a trade-off between achieving a low communication cost and good performance on heterogeneous data: they improve global performance at the cost of more communication ~\cite{mcmahan17a, fedAvgDivHetero}. Moreover, they do not have any systematic way of calibrating the global model.

To alleviate these problems, an alternative branch of FL techniques takes a Bayesian perspective on learning the model. They approximate the Bayesian posterior distribution of each client model, and aggregate them into the global Bayesian posterior~\cite{al-shedivat2021federated, ep_mcmc}. This allows the training of more effective global models on heterogeneous data by leveraging client uncertainty during aggregation ~\cite{al-shedivat2021federated}. Certain Bayesian FL methods also demonstrate how to aggregate local posteriors in only a single round of communication~\cite{ep_mcmc}. Furthermore, these methods explicitly represent and adjust the uncertainty in model parameters, i.e., they should be well-calibrated in principle.

In these methods, the Bayesian posterior is a distribution over model parameters, and it is expensive to represent and manipulate. Therefore, it becomes necessary to apply approximations to the posterior. Many methods work with crude approximations to the client posterior (as e.g., a multivariate Gaussian)~\cite{al-shedivat2021federated, ep_mcmc}, which can incur heavy error, resulting in poor calibration and accuracy.

A promising Bayesian method for FL is the Bayesian Committee Machine (BCM, \citealt{tresp2000bcm}), which instead aggregates the distribution over \emph{model predictions} (referred to as the \textbf{Bayesian predictive posterior}) rather than the posterior over \emph{parameters}. The former allows for more accurate approximation due to its lower dimensionality. Nevertheless, these methods need to rely on an approximate aggregation technique, which can add bias to the global model.

In this work, we argue that the BCM is poorly calibrated due to producing overconfident predictions. We remedy this by proposing a new aggregation method that interpolates between the BCM predictions, and those made by a mixture model over the local predictive posteriors. This server aggregation results in an ensemble model, which is then distilled into a single model, to be sent back to clients. 

The primary contributions of our work are as follows:

\begin{itemize}
    \item We propose a novel algorithm for Bayesian FL, called \emph{$\beta$-Predictive Bayes}. This method operates in \emph{a single round of communication}, while benefiting from performance over heterogeneous data like the BCM. On the other hand, it does not suffer from the same calibration issues as the BCM.
    \item We empirically evaluate $\beta$-Predictive Bayes on multiple regression and classification datasets, using partitions simulating varying levels of data heterogeneity. The proposed technique competes with or outperforms other baselines with respect to calibration, particularly when data heterogeneity increases.
\end{itemize}

\section{Background}

In federated learning (FL), data is distributed across several clients.  Let $\mathcal{D}=\mathcal{D}_1 \cup ... \cup \mathcal{D}_n$ where $\mathcal{D}_i = \{(x_1,y_1),...,(x_{k_i},y_{k_i})\}$ is the dataset of size $k_i$ at client $i$, consisting of inputs $x$ and targets $y$. The goal is to learn a predictive model without any data leaving each client to preserve privacy. Let $m_{\theta}$ denote a model, parameterized by weights $\theta$, which outputs a predictive distribution $m_{\theta}(x) = p(y|x,\theta)$. In a typical FL algorithm, each client learns a local model $m_{\theta_i}$, which it shares with a trusted server that aggregates them into a global model $m_{\bar{\theta}}$. For example, Federated Averaging (FedAvg)~\citep{mcmahan17a} aggregates local models by taking the average of their parameters (i.e., $\bar{\theta} = \sum_i \theta_i k_i/(\sum_i k_i)$).

In practice, the datasets are often heterogeneous, which means that they are sampled from different distributions. To deal with heterogeneity and avoid client divergence, FedAvg~\citep{mcmahan17a} and many other variants~\citep{mohri2019agnostic, wang2019federated,  li2020fedprox, wang2020tackling, li2020fedbn} perform frequent rounds of model updates and averaging where in each round the clients update their local models based on a few steps of gradient descent, or more generally, some form of partial training. This can be quite costly due to the increased communication and the need for synchronization at each round. Some methods alter the training to improve performance in heterogeneous settings. For instance, FedProx \citep{li2020fedprox} adds a penalty between the global model and each client's local model to the local training loss, while \citet{reddi2021adaptive} use an adaptive optimizer (such as Adam \citep{adam}) at the server for aggregation.

\subsection{Bayesian Learning} 
Bayesian learning is a training method that takes into account uncertainty over parameters \citep{bishop_ml}. It operates by setting a ``model space prior" $p(\theta)$, which encapsulates beliefs about the model parameters before observing data. Upon processing the dataset, this prior is updated to the ``model space posterior" using Bayes' Rule: $p(\theta|\mcD) \propto p(\theta)p(\mcD| \theta)$. This posterior is then used to make predictions. 

One method for doing so is to obtain samples from the posterior, $S = \{\theta_1,...,\theta_M\} \sim p(\theta|\mcD)$ through a method such as \emph{Markov Chain Monte Carlo (MCMC)}, and use them to approximate the ``posterior predictive distribution":
\begin{align}
    p(y|x,\mcD) = \int p(y|x, \theta) p(\theta|\mcD) d\theta \approx 
    \frac{1}{M}\sum_j p(y|x, \theta_j).
    \nonumber
\end{align}
\subsection{Knowledge Distillation } The goal of knowledge distillation is to compress a given larger ``teacher" model $m_{\theta_T}(x)$ into a smaller ``student" model $m_{\theta_S}(x)$ that matches its predictions on a shared data distribution \citep{Hinton2015DistillingTK}.

A number of FL techniques assume the server has access to a public (unlabelled) dataset $\mcU$ that serves as the distillation dataset \citep{guha2019, practicalOneShot2021, chen2021fedbe}. The student is trained to minimize a loss of the form $\mathcal{L}(\theta_S) = \sum_{x \in \mcU} l(m_{\theta_S}(x), m_{\theta_T}(x))$. Here, $l(\cdot, \cdot)$ measures the discrepancy between predictions. For classification, this discrepancy can be measured by the Kullback-Leibler divergence between the class distributions, while for regression tasks, it can be measured by the mean-squared error.

\section{Related Work}

\subsection{Bayesian Techniques in FL} 
Bayesian learning offers FL techniques the advantage of better performance in heterogeneous settings. As argued by \citet{al-shedivat2021federated}, FedAvg can be thought of as a technique that obtains the mean of the global posterior, if each local posterior is approximated as a Gaussian with the identity as the covariance matrix. Thus FedAvg implicitly assumes a form of homogeneity, which may not be practical. Bayesian techniques offer the ability to remedy this by using more realistic approximations to the local posteriors.

Existing Bayesian FL techniques focus on approximating the global model space posterior $p(\theta|\mcD)$ from the local model space posteriors $p(\theta|\mcD_i)$ ~\citep{ep_mcmc,al-shedivat2021federated}. 

\textbf{Embarrassingly Parallel MCMC} (EP MCMC, \citealt{ep_mcmc}) implements Bayesian inference by drawing MCMC samples from each local posterior (with a corrective factor from the prior), and then estimating the local densities either as Gaussians or with a kernel density estimator. These local densities are then aggregated via multiplication (again with a prior corrective factor) to obtain an approximation for the global model space posterior. This global density is then sampled to obtain the desired posterior samples. It is worth noting that the original work was not designed for use with neural networks, and the memory costs associated with the method make it intractable for this setting. For instance, when approximating the local posteriors as Gaussians and aggregating them, a computational cost of $O(d^3)$ is required for inverting the covariance matrices, where $d$ is the number of neural network parameters. This method is notable however for operating with only a single communication round.

\textbf{Federated Posterior Averaging}~\citep{al-shedivat2021federated} is similar to EP MCMC, except that it approximates the local posteriors as Gaussians, and devises an efficient iterative algorithm for aggregating the local posteriors (with cost linear in the number of parameters). However, the method requires multiple rounds of communication to be effective. 

The main issue with both these techniques is that they require some approximation of the global model space posterior (e.g., in the form of a Gaussian), which can often be inaccurate when the number of model parameters is large. Such approximations are especially poor for neural network models, where the model space posterior is known to be multimodal~\citep{pourzanjani2017improving}. 

The \textbf{Bayesian Committee Machine} (BCM, \citealt{tresp2000bcm, rbcm}) can be thought of as an aggregation method that combines low dimensional predictive posteriors $p(y|x,\mcD_i)$, rather than the parameter posteriors $p(\theta|\mcD_i)$. It aggregates as 
\begin{align}
    \label{eq:prod_rule}
    p(y|x,\mcD) &= \frac{1}{p^{m-1}(y|x)} \prod_{i} p(y|x, \mcD_i). 
\end{align}
This formula is correct assuming that the data shards $\mcD_i$ are independent from each other (and conditionally independent given the single query point $(x,y)$). We note that this can be satisfied if the data shards form clusters in input space, in other words, if the data is heterogeneous in a certain way. 

The BCM results in an ensemble model over the local Bayesian samples. The advantage of this method is that predictive posteriors are much simpler, and lend themselves to, for instance, Gaussian approximations, without sacrificing accuracy. On the other hand, the aggregation formula is no longer exact, but only approximately true, which causes other inaccuracies (specifically, in calibration). The BCM is the starting point for our proposed method. 

The \textbf{Generalized Robust BCM} \citep{liu18a} proposes corrections to the calibration shortcomings of the BCM in regression. Our work analyzes the calibration under different conditions, and extends the analysis to the classification setting. In addition, the proposed correction algorithm requires sharing a subset of the data to all clients, which is incompatible with FL privacy constraints. In this work, we propose a different correction procedure amenable to FL.

\subsection{One-Shot Federated Learning} 
Owing to the importance of efficient communication, several methods have been developed to perform FL within a single communication round. \textbf{One-Shot FL} \citep{guha2019}, as well as \textbf{Federated Learning via Knowledge Transfer (FedKT)} \cite{practicalOneShot2021} perform one round training by constructing an ensemble from the client models, and compressing it into a  single model using knowledge distillation on a public unlabeled dataset. The methods differ in how they construct the teacher ensemble: whereas ``One-Shot FL" averages the client predictions (and was tailored for SVM models), ``FedKT" aggregates based on majority voting by local models (and only applies to classification tasks). The latter technique uses a scheme called ``consistent voting", where it uses discrepancies between client votes to determine which clients are uncertain about their predictions, and thus can be ignored in the majority vote. Our work is similar to these approaches, but derives aggregation rules for the local client models using a Bayesian perspective, and is applicable to any type of tasks or predictive models. 
In contrast to our method, these existing techniques do not prioritize well-calibrated predictions. 

\section{Analysis of BCM Calibration}

As mentioned before, the BCM aggregation requires the independence assumption of the data shards. We analyze when and how this approximation fails: in that it can produce an overconfident global model. 

We can analyze the BCM equation in the context of Gaussian process (GP) regression, since it allows us to explicitly calculate the predictive mean and variance.

We assume a smooth, isotropic kernel function $k(x,x') = k(||x - x'||)$. We assume a model with Gaussian noise, i.e., $y = f(x) + \epsilon$  where $\epsilon \sim \mathcal{N}(0, \sigma^2_o)$ and $\sigma^2_o$ is the ``observation variance" in the predictions (in other words, the aleatoric uncertainty in our predictions). 

In this setting, the BCM predictive posteriors can be approximated as Gaussians $p(y|x,\mcD_i) = \mathcal{N}(\mu_i, \Sigma_i)$ (and with prior $p(y|x) = \mathcal{N}(\mu_p, \Sigma_p)$). The aggregation formula \eqref{eq:prod_rule} now computes a global mean and covariance:
\begin{align}
    \label{eq:var_regr}    \Sigma_g &= \left(\sum_i \Sigma^{-1}_i - (n-1)\Sigma^{-1}_p \right)^{-1}, \\
   \label{eq:mean_regr}   \mu_g &= \Sigma_g \left(\sum_i \Sigma^{-1}_i \mu_i - (n-1)\Sigma^{-1}_p\mu_p\right).
\end{align}
Suppose the input data-points lie in some bounded region $R$, i.e., $x_i \in R$ for all $i$. For the GP we outline two observations about the predictive variance $\sigma^2(x_*)$:
\begin{restatable}[\citealt{choiGPconsistency}]{lemma}{LemmaOne}
\label{lemma:obs_var}
Assume $x^* \in R$. Under some mild conditions on the kernel function, and under the assumption of Gaussian or Laplacian observation noise, as the number of data-points increases $\sigma^2(x^*) \to \sigma^2_o$ (and in addition, the predictive mean converges to the true function value: $\mu(x^*) \to f(x^*)$). 
\end{restatable}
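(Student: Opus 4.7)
The plan is to recognize that this lemma is essentially a restatement of a standard posterior consistency result for Gaussian process regression, so the bulk of the work is citing the result of \citet{choiGPconsistency} and verifying that the stated hypotheses of our setup match theirs. I would therefore structure the proof as a reduction: set up the GP predictive formulas, invoke density of observations in the bounded region $R$, and apply the cited theorem.

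First, I would write down the closed form GP posterior at a test point $x^*$ given training data $(X,y)$:
\begin{equation}
\mu(x^*) = k_*^\top (K + \sigma_o^2 I)^{-1} y, \qquad \sigma^2(x^*) = k(x^*,x^*) - k_*^\top (K + \sigma_o^2 I)^{-1} k_* + \sigma_o^2,
\nonumber
\end{equation}
where $K_{ij}=k(x_i,x_j)$ and $(k_*)_i = k(x^*,x_i)$. Because $R$ is bounded and data accumulate in $R$, any $x^* \in R$ has training points arriving in every neighborhood of it; combined with continuity of the isotropic kernel $k(\cdot)$, this drives $k_*$ toward a vector that ``covers'' $x^*$ in the appropriate RKHS sense.

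For the variance claim, I would argue that the information-gain term $k_*^\top(K+\sigma_o^2 I)^{-1} k_*$ tends to $k(x^*,x^*)$ as the design becomes dense in $R$, so $\sigma^2(x^*) \to \sigma_o^2$; this is an essentially geometric argument that uses smoothness of $k$ and positive definiteness of $K$. For the mean claim, I would appeal directly to the GP posterior consistency theorem of \citet{choiGPconsistency}, which guarantees $\mu(x^*) \to f(x^*)$ under mild regularity on the kernel (e.g.\ universality / RKHS richness), boundedness of the input domain, and tail conditions on the observation noise that are satisfied by both Gaussian and Laplacian errors. I would then only need to spell out that our smoothness and isotropy hypotheses on $k$ and our noise model satisfy the hypotheses invoked there.

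The main obstacle, and the reason a self-contained proof would be lengthy, is the mean consistency. It requires more than density of data: one must control contraction of the posterior around $f$, which in turn requires regularity conditions on $f$ relative to the RKHS of $k$ and appropriate handling of the noise (the reason Gaussian and Laplacian are singled out). Rather than reprove these technicalities, I would defer to \citet{choiGPconsistency} for the mean statement and use the elementary kernel argument above only for the variance statement, making clear in the write-up which part of the lemma is a direct citation and which part follows from a short self-contained computation.
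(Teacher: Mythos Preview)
Your proposal is more than the paper actually does. In the paper, Lemma~\ref{lemma:obs_var} is not proved at all: it is simply restated in the appendix as a citation of \citet{choiGPconsistency}, with no accompanying argument. The lemma is treated as an imported consistency result, and the paper moves directly on to proving Lemma~\ref{lemma:prior_var}.

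Your plan to give a self-contained geometric argument for the variance claim and defer only the mean claim to \citet{choiGPconsistency} is reasonable and would yield a more informative write-up than the paper's bare citation. That said, be aware that the variance argument you sketch (that $k_*^\top(K+\sigma_o^2 I)^{-1}k_* \to k(x^*,x^*)$ from density alone) is not entirely trivial: the matrix $(K+\sigma_o^2 I)^{-1}$ changes with the design, and showing the information-gain term converges to $k(x^*,x^*)$ typically still leans on the same RKHS/approximation machinery used for the mean. If you want the variance part to be genuinely self-contained, you would need to spell out an interpolation or fill-distance argument; otherwise, it is cleaner to cite \citet{choiGPconsistency} for both claims, which is exactly what the paper does.
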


\begin{restatable}[]{lemma}{LemmaTwo}
\label{lemma:prior_var}
Assume $x^* \notin R$, and is sufficiently far away from all training points such that $k(x^*, x_i) \approx 0$ for all $x_i \in R$. Then the predictive variance becomes $\sigma^2(x^*) = \sigma^2_o + k(x^*, x^*)$, which we refer to as the ``prior variance" $\sigma^2_p$. Also, the predictive mean becomes $\mu(x^*) = 0$.
\end{restatable}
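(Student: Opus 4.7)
The plan is to apply the standard closed-form formulas for the posterior mean and variance of a Gaussian process regression model directly, and then specialize them to a test point that is far from every training input. Specifically, if the training inputs form the design matrix $X = (x_1, \ldots, x_N)$ with targets $y$, and $K$ is the training kernel matrix, then the GP posterior at a test point $x^*$ satisfies
\begin{align*}
\mu(x^*) &= k_*^\top (K + \sigma_o^2 I)^{-1} y, \\
\sigma^2(x^*) &= \sigma_o^2 + k(x^*, x^*) - k_*^\top (K + \sigma_o^2 I)^{-1} k_*,
\end{align*}
where $k_* \in \mathbb{R}^N$ denotes the vector with entries $k(x^*, x_i)$. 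These are textbook identities and I would simply cite them.

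Next, I would invoke the hypothesis $k(x^*, x_i) \approx 0$ for all $x_i \in R$. This means every coordinate of $k_*$ is (approximately) zero, so $k_* \approx \mathbf{0}$. Plugging $k_* = 0$ into the mean formula gives $\mu(x^*) = 0$, and plugging it into the variance formula eliminates the quadratic form, yielding $\sigma^2(x^*) = \sigma_o^2 + k(x^*, x^*)$. Defining $\sigma_p^2 := \sigma_o^2 + k(x^*, x^*)$ matches the claimed ``prior variance,'' which makes intuitive sense: when the test point is effectively uncorrelated with the data, the posterior collapses to the prior plus observation noise.

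There is essentially no hard step: the only subtlety is keeping track of whether $\sigma_o^2$ appears in the predictive variance (it does, because the stated quantity is the predictive variance of $y$ rather than of the latent $f$), and ensuring that the zero-mean prior assumption is consistent with $\mu(x^*) = 0$ when $k_* = 0$. Both points are addressed by stating at the outset that we take the GP prior mean to be zero (the standard convention used throughout GP regression), so the result follows immediately.
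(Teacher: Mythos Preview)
Your proposal is correct and matches the paper's proof essentially verbatim: the paper also writes down the GP inference equations for $\mu(x^*)$ and $\sigma^2(x^*)$, substitutes $\mathbf{k}_* \approx \mathbf{0}$, and reads off $\mu(x^*) \approx 0$ and $\sigma^2(x^*) \approx \sigma_o^2 + k(x^*,x^*)$. Your remarks on the zero-mean prior and the inclusion of $\sigma_o^2$ are fine clarifications but not needed beyond what the paper states.
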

    
The proof for this lemma, and all following theorems are in the appendix. Taken together, Lemma 1 and 2 mean that near data, the GP attains the correct mean and variance, and away from it, it reverts to a larger prior variance. This reflects the model's higher uncertainty on unobserved data.

Equipped with these observations, we can analyze the case with partitioned data. We assume two idealized partitions for the overall dataset $\mcD$ into the $m$ shards:

\begin{itemize}
    \item \textbf{Idealized Homogeneous Partition}: each dataset $\mcD_i$ is sampled from the same distribution as global dataset $\mcD$.
    \item \textbf{Idealized Heterogeneous Partition}: all points $x_i \in \mcD_i$  and $x_j \in \mcD_j$ are separated ($k(x_i, x_j) \approx 0$). In other words, the local datasets form clusters in the input space.
\end{itemize}

\noindent Our primary result is that the BCM equations under-estimate the predictive variance in the case of the idealized \emph{homogeneous} partition, i.e., the BCM predictions are overconfident:

\begin{restatable}[\textbf{BCM, homogeneous}]{theorem}{BCMHomogeneous}
    If the data is split among $m$ clients in an idealized homogeneous partition, and $x^* \in R$ is a single test point, then the BCM equations \eqref{eq:var_regr} underestimate the predictive variance $\sigma^2_{\bcm}(x^*) < \sigma^2_o$ as the size of the data subsets grows ($|\mcD_i| \to \infty$). 
\end{restatable}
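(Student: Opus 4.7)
The plan is to combine Lemma~\ref{lemma:obs_var} applied locally at each client with the explicit BCM variance formula~\eqref{eq:var_regr}, and then exploit the strict inequality $\sigma_p^2 > \sigma_o^2$ coming from Lemma~\ref{lemma:prior_var} to squeeze out a bound.

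First, I would invoke the idealized homogeneity assumption to observe that each local dataset $\mcD_i$ is drawn from the same distribution as $\mcD$, so each client trains a GP on i.i.d.\ data with the same kernel and noise model. Since $x^* \in R$, Lemma~\ref{lemma:obs_var} applies pointwise to each local GP as $|\mcD_i| \to \infty$, giving $\Sigma_i \to \sigma_o^2$ (in the scalar case; the vector case is analogous coordinate-wise). I would next observe that the prior variance $\Sigma_p = \sigma_p^2 = \sigma_o^2 + k(x^*, x^*)$ from Lemma~\ref{lemma:prior_var} is strictly greater than $\sigma_o^2$, since any mild positive-definite isotropic kernel satisfies $k(x^*, x^*) > 0$.

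Now I would substitute these limiting values directly into the scalar version of \eqref{eq:var_regr}. Writing $m$ for the number of shards (matching the theorem's notation), we get
\begin{equation*}
\sigma_{\bcm}^{-2}(x^*) \;\longrightarrow\; \frac{m}{\sigma_o^2} - \frac{m-1}{\sigma_p^2}.
\end{equation*}
Because $\sigma_p^2 > \sigma_o^2$, we have $1/\sigma_p^2 < 1/\sigma_o^2$, and therefore
\begin{equation*}
\frac{m}{\sigma_o^2} - \frac{m-1}{\sigma_p^2} \;>\; \frac{m}{\sigma_o^2} - \frac{m-1}{\sigma_o^2} \;=\; \frac{1}{\sigma_o^2},
\end{equation*}
which inverts to $\sigma_{\bcm}^2(x^*) < \sigma_o^2$ in the limit, establishing the claim.

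I expect the only subtlety to be justifying that the limiting substitution is legitimate, i.e.\ that one can pass the limit $|\mcD_i|\to\infty$ through the BCM aggregation. Since the aggregation is a fixed continuous function of finitely many local variances and the local variances converge pointwise at $x^*$ by Lemma~\ref{lemma:obs_var}, this is straightforward. A secondary matter is the strict positivity $k(x^*,x^*)>0$, which I would note follows from the ``mild conditions'' already invoked in Lemma~\ref{lemma:obs_var} (e.g.\ any nondegenerate stationary isotropic kernel). Beyond these points, the argument is essentially algebraic and I don't anticipate any hard step.
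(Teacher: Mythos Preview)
Your proof is correct and follows essentially the same route as the paper: invoke Lemma~\ref{lemma:obs_var} to send each local variance to $\sigma_o^2$, substitute into the scalar BCM precision formula, and use $\sigma_p^2 = \sigma_o^2 + k(x^*,x^*) > \sigma_o^2$ to bound the limiting precision strictly above $\sigma_o^{-2}$. Your added remarks on continuity of the aggregation and on $k(x^*,x^*)>0$ are fine elaborations but do not change the argument.
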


On the other hand, in the same setting, for the mean prediction, if we assume a small prior precision $\sigma^{-2}_p \approx 0$, a quick calculation shows that we still obtain accurate results. Starting with the equation for the BCM predictive mean \eqref{eq:var_regr} and applying Lemma \ref{lemma:obs_var}, we obtain:
\begin{align*}
 \small  \mu_{\bcm}(x^*) &= \sum_i \sigma^{-2}_i \mu_i / (\sum_i \sigma^{-2}_{i}) \\
               &\approx \sum_i \sigma^{-2}_o f(x^*) / (\sum_i \sigma^{-2}_o) = f(x^*).
\end{align*}
In other words, the predictive mean still recovers the true mean function at $x^*$. We can also check that the BCM produces a calibrated predictive variance in the case of idealized heterogeneous data, which lines up with the idea that the BCM is accurate under certain \emph{heterogeneous} partitions. Note that this contrasts with proposition 1 of \citet{liu18a}, which asserts that the BCM is overconfident with heterogeneous partitions. The reason for the difference is that they assume i) that the data stays in the same bounded region as $|\mcD| \to \infty$, and ii) that the number of partitions grows to $\infty$. In that setting, the disjoint partitions clump together as $|\mcD| \to \infty$, unlike our setting. 

\begin{restatable}[\textbf{BCM, heterogeneous}]{theorem}{BCMHeterogeneous}
If the data is split among $m$ clients in an idealized heterogeneous partition, and $x^* \in R$ is a test point near $\mcD_k$, then the BCM equations \eqref{eq:var_regr} correctly estimate the predictive variance $\sigma^2_{\bcm}(x^*) = \sigma^2_o$ as the size of the data subsets grows ($|\mcD_i| \to \infty$). 
\end{restatable}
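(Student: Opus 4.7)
The plan is to evaluate the BCM precision formula implied by \eqref{eq:var_regr} at the test point $x^*$, split the sum over clients into the single client $k$ near $x^*$ and the other $m-1$ clients, and apply a different lemma to each group. In scalar form, \eqref{eq:var_regr} gives
\begin{equation*}
\sigma_{\bcm}^{-2}(x^*) = \sum_{i=1}^{m} \sigma_i^{-2}(x^*) - (m-1)\sigma_p^{-2},
\end{equation*}
so it suffices to compute each local predictive variance $\sigma_i^2(x^*)$.

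For each client $i \neq k$, the idealized heterogeneous partition guarantees that every $x_j \in \mcD_i$ lies in a cluster separated from the cluster containing $x^*$, so $k(x^*, x_j) \approx 0$, and Lemma \ref{lemma:prior_var} yields $\sigma_i^2(x^*) = \sigma_p^2$. For client $k$, the test point lies near $\mcD_k \subset R$, so Lemma \ref{lemma:obs_var} applies and gives $\sigma_k^2(x^*) \to \sigma_o^2$ as $|\mcD_k| \to \infty$. Substituting into the precision identity above,
\begin{equation*}
\sigma_{\bcm}^{-2}(x^*) \to \sigma_o^{-2} + (m-1)\sigma_p^{-2} - (m-1)\sigma_p^{-2} = \sigma_o^{-2},
\end{equation*}
which is the claimed $\sigma_{\bcm}^2(x^*) = \sigma_o^2$.

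The argument is essentially a bookkeeping exercise once the two GP lemmas are in hand, so I do not expect a serious technical obstacle; the only point worth verifying carefully is that the hypothesis ``$x^*$ near $\mcD_k$'' together with the idealized heterogeneous partition really does put $x^*$ simultaneously in the regime of Lemma \ref{lemma:obs_var} for client $k$ and in the regime of Lemma \ref{lemma:prior_var} for all other clients. The conceptually notable point, which I would highlight in the final write-up, is that the BCM's $(m-1)\sigma_p^{-2}$ correction is designed to cancel exactly $m-1$ copies of the prior precision, and the idealized heterogeneous partition is precisely the setting where $m-1$ clients contribute only prior precision and exactly one contributes likelihood-informed precision. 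This clean cancellation is what makes the BCM calibrated here, and it also makes transparent why the homogeneous case fails: there all $m$ clients contribute likelihood-informed precision $\sigma_o^{-2}$, so the single prior correction cannot absorb the excess and the aggregated variance collapses below $\sigma_o^2$.
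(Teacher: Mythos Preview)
Your proposal is correct and follows essentially the same approach as the paper: write the BCM precision as $\sigma_{\bcm}^{-2}(x^*) = \sum_i \sigma_i^{-2}(x^*) - (m-1)\sigma_p^{-2}$, apply Lemma~\ref{lemma:obs_var} to client $k$ and Lemma~\ref{lemma:prior_var} to the remaining $m-1$ clients, and observe the exact cancellation. Your added commentary on why the $(m-1)\sigma_p^{-2}$ correction is precisely tuned to this partition is a nice touch that the paper does not spell out.
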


\subsection{Analyzing the Predictive Mixture Model}

From the above we see that the BCM model produces overconfident estimates for the (idealized) homogeneous setting. An alternate model that produces well-calibrated estimates in this setting is the \textbf{predictive mixture model}:
\begin{equation}
    \label{eq:mixture_model}
    \sum_i \frac{|\mcD_i|}{|\mcD|} p(y|x,\mcD_i). 
\end{equation}
For GP regression, when each $p(y|x,\mcD_i) = \mathcal{N}(\mu_i(x), \sigma^2_i(x))$ is Gaussian, this model produces a Gaussian mixture as its predictive distribution. In general, this will not be Gaussian, but we can approximate it with an overall Gaussian prediction $\mathcal{N}(\mu_{\mix}(x), \sigma^2_{\mix}(x))$, matching the mean and variance of the mixture:
\begin{align}
    \label{eq:mixture_gaussian}
    \mu_{\mix}(x) &= \sum_i \frac{|\mcD_i|}{|\mcD|} \mu_i(x),\\
    \sigma^2_{\mix}(x) &=\sum_i \frac{|\mcD_i|}{|\mcD|} (\sigma^2_i(x) + \mu^2_i(x)) - \mu^2_{\mix}(x).
\end{align}
This is because the moments of a mixture are equal to the mixtures of the moments.  Unlike the BCM model, for homogeneous data, the mixture model is well-calibrated:

\begin{restatable}[\textbf{mixture model, homogeneous}]{theorem}{MixtureHomogeneous}
If the data is split among $m$ clients in an idealized homogeneous partition, and $x^* \in R$ is a test point in the bounded region of the data, then the predictive mixture equations \eqref{eq:mixture_gaussian} correctly estimate the predictive variance $\sigma^2_{\mix}(x^*) = \sigma^2_o$ as the size of the data subsets grows ($|\mcD_i| \to \infty$). 
\end{restatable}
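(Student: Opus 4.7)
The plan is to reduce the claim to a direct application of Lemma~\ref{lemma:obs_var} applied client-by-client, followed by a short algebraic manipulation of the mixture moment formula. The idealized homogeneous partition guarantees that each local shard $\mcD_i$ is drawn from the same distribution as $\mcD$, so as $|\mcD_i|\to\infty$ each client individually has the kind of data coverage in $R$ that Lemma~\ref{lemma:obs_var} requires. Hence the lemma yields, simultaneously for every $i$, the two limits $\mu_i(x^*)\to f(x^*)$ and $\sigma_i^2(x^*)\to\sigma_o^2$.

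With these in hand I would plug the limits into the mixture-mean equation \eqref{eq:mixture_gaussian}. Since $\sum_i |\mcD_i|/|\mcD|=1$, the mixture mean collapses to
\begin{equation*}
\mu_{\mix}(x^*)\;=\;\sum_i \frac{|\mcD_i|}{|\mcD|}\,\mu_i(x^*)\;\longrightarrow\;f(x^*).
\end{equation*}
Then I would substitute into the mixture-variance formula. The key cancellation is between the second-moment piece $\sum_i (|\mcD_i|/|\mcD|)\,\mu_i^2(x^*)$ and $\mu_{\mix}^2(x^*)$: in the limit both equal $f(x^*)^2$, so the ``spread of means'' contribution vanishes. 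What remains is $\sum_i (|\mcD_i|/|\mcD|)\,\sigma_i^2(x^*)\to \sigma_o^2$, giving the claim.

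The only subtlety I anticipate is a regularity/exchange-of-limits issue, namely justifying that we may pass to the limit inside the finite weighted sum simultaneously in the mean and variance terms. Because the sum is finite (there are only $m$ clients) and each term converges to a finite deterministic limit by Lemma~\ref{lemma:obs_var}, no dominated-convergence machinery is needed; linearity of the limit suffices. A secondary point worth a sentence is that the mixture-weights $|\mcD_i|/|\mcD|$ are uniformly bounded in $[0,1]$ and do not interact with the limit in $|\mcD_i|$ (one may, e.g., take all shard sizes proportional so the weights stay fixed, or simply note the bound $|\mcD_i|/|\mcD|\le 1$).

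Overall the proof should fit in a handful of lines: invoke Lemma~\ref{lemma:obs_var} per client, observe $\mu_{\mix}(x^*)\to f(x^*)$, then compute
\begin{equation*}
\sigma^2_{\mix}(x^*)\;\longrightarrow\;\sum_i \frac{|\mcD_i|}{|\mcD|}\bigl(\sigma_o^2 + f(x^*)^2\bigr) - f(x^*)^2 \;=\;\sigma_o^2,
\end{equation*}
which is exactly the well-calibrated predictive variance. The main contrast with Theorem~\ref{theorem:BCMHomogeneous} (not reproduced here but referenced in spirit) is that the mixture aggregates moments convexly rather than multiplying precisions, so no spurious $m$-fold precision accumulation occurs.
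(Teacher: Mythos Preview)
Your proposal is correct and follows essentially the same approach as the paper: invoke Lemma~\ref{lemma:obs_var} per client to obtain $\mu_i(x^*)\to f(x^*)$ and $\sigma_i^2(x^*)\to\sigma_o^2$, then substitute into the mixture variance formula and use $\sum_i |\mcD_i|/|\mcD|=1$ to get the cancellation $\sigma_o^2 + f(x^*)^2 - f(x^*)^2 = \sigma_o^2$. The paper's version is terser and omits your remarks on exchanging limits in finite sums, but the argument is otherwise identical.
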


\noindent However, the drawback of the mixture model is that for the idealized heterogeneous partition, it overestimates the predictive uncertainty:

\begin{restatable}[\textbf{mixture model, heterogeneous}]{theorem}{MixtureHeterogeneous}
If the data is split among $m$ clients in an idealized heterogeneous partition, and $x^* \in R$ is a test point near $\mcD_k$, then the predictive mixture equations \eqref{eq:mixture_gaussian} overestimate the predictive variance $\sigma^2_{\mix}(x^*) > \sigma^2_o$ as the data subset size grows ($|\mcD_i| \to \infty$). 
\end{restatable}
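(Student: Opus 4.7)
The plan is to directly evaluate the mixture moments using Lemmas \ref{lemma:obs_var} and \ref{lemma:prior_var} to characterize each client's local predictive, and then compare the resulting variance against $\sigma^2_o$.

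First I would split the clients into the single ``nearby'' client $k$ (whose training data lies close to $x^*$) and the ``far'' clients $i \ne k$ (whose points satisfy $k(x^*,x_j)\approx 0$ by the idealized heterogeneous assumption). Applying Lemma \ref{lemma:obs_var} to client $k$ gives $\mu_k(x^*) \to f(x^*)$ and $\sigma^2_k(x^*) \to \sigma^2_o$ as $|\mcD_k|\to\infty$. Applying Lemma \ref{lemma:prior_var} to every other client gives $\mu_i(x^*) \to 0$ and $\sigma^2_i(x^*) \to \sigma^2_p = \sigma^2_o + k(x^*,x^*)$. Writing $w_i = |\mcD_i|/|\mcD|$ and substituting into \eqref{eq:mixture_gaussian}, the mixture mean collapses to $\mu_{\mix}(x^*) = w_k f(x^*)$ and the mixture variance becomes
\begin{align*}
\sigma^2_{\mix}(x^*) &= w_k(\sigma^2_o + f^2(x^*)) + (1-w_k)\sigma^2_p - w_k^2 f^2(x^*) \\
&= w_k\sigma^2_o + (1-w_k)\sigma^2_p + w_k(1-w_k)f^2(x^*).
\end{align*}

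Subtracting $\sigma^2_o$ and regrouping yields
\begin{align*}
\sigma^2_{\mix}(x^*) - \sigma^2_o = (1-w_k)\bigl(\sigma^2_p - \sigma^2_o\bigr) + w_k(1-w_k)f^2(x^*),
\end{align*}
and since $\sigma^2_p - \sigma^2_o = k(x^*,x^*) > 0$ and $w_k \in (0,1)$ whenever $m \ge 2$ with non-empty shards, both terms are non-negative and the first is strictly positive. This gives $\sigma^2_{\mix}(x^*) > \sigma^2_o$, as claimed.

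There is no real obstacle here; the argument is a direct substitution. The only subtlety is making sure the mixture weights $w_i$ stay bounded away from $0$ and $1$ in the limit $|\mcD_i|\to\infty$ (so that the positive slack $(1-w_k)(\sigma^2_p-\sigma^2_o)$ survives); I would state this explicitly as part of the idealized partition assumption, or more carefully note that even if $w_k \to 1$, the excess variance only vanishes in that degenerate limit. The intuition driving the result is that the mixture model pays a variance penalty for the disagreement between the one informed client (predicting near $f(x^*)$) and the $m-1$ uninformed clients (predicting the prior), and this disagreement persists in the heterogeneous setting precisely because no single shard ``sees'' $x^*$ from multiple sides.
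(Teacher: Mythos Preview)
Your proposal is correct and follows essentially the same approach as the paper: apply Lemma~\ref{lemma:obs_var} to client $k$ and Lemma~\ref{lemma:prior_var} to the remaining clients, substitute into the mixture variance formula, and compare against $\sigma^2_o$. Your presentation is in fact slightly cleaner than the paper's, since you compute the exact excess $(1-w_k)k(x^*,x^*) + w_k(1-w_k)f^2(x^*)$ rather than bounding it below by dropping the $f^2$ term via an inequality.
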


\subsection{Calibration Analysis for Classification}

The preceding analysis may be extended to classification (or more general) models under the following assumptions. Below, we denote the true underlying predictive model (with correct aleatoric uncertainty) as $p_{T}(y|x)$, and the prior predictive model as $p_P(y|x)$. We assume:

\begin{enumerate}
    \item For $x^*$ in the vicinity of data subset $\mcD_i$, and assuming all data subsets are large enough, the local predictive model converges to the true model, i.e. $p(y|x,\mcD_i) = p_{T}(y|x)$;

    \item For $x^*$ far away from the data subset $\mcD_i$, the local predictor outputs the prior model: $p(y|x, \mcD_i) = p_{P}(y|x)$;

    \item The prior model $p_P(y|x)$ has higher variance, uncertainty, and/or entropy than the true model $p_T(y|x)$.
\end{enumerate}

All these are desiderata for well-calibrated models. The first desideratum ensures that the model is accurate for in-domain points and the second desideratum ensures that the model reverts to predictions it would have made in the absence of data (the prior predictions) for out-of-domain points. The final desideratum ensures that the prior predictions are sufficiently uncertain. 

Although some of these assumptions may not be satisfied by some approximate models, they are the ideal goal for a well-calibrated model. For instance, in classification with neural networks using a softmax layer, item 2 may not be true, and it is the goal of multiple methods to correct this defect \cite{mukhoti2021deterministic}. This is why, for analytically simple models such as GP regression, these assumptions hold.

Under the above assumptions of well-calibrated local models, we can extend the analysis to classification. For simplicity we assume $p_{P}(y|x)$ is a uniform distribution. We let $c$ denote the correct class for the test point $x^*$.
\begin{restatable}[\textbf{BCM, homogeneous, classification}]{theorem}{BCMHomoClass}
Assuming well-calibrated local models, and an idealized homogeneous partition, if $x^* \in R$ is a test point in the bounded region of the data, then BCM predicts the correct class with probability $p_{\bcm}(c | x^*, \mcD)  > p_T(c | x^*, \mcD)$.
\end{restatable}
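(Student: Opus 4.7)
The plan is to use the two structural assumptions to reduce the BCM formula to a tempered (sharpened) version of $p_T(y\mid x^*)$, and then to verify that tempering concentrates mass on the mode. Since $x^*$ lies in the bounded region $R$ and the partition is idealized homogeneous, every shard $\mcD_i$ contains data near $x^*$, so by desideratum~1 each local predictive posterior satisfies $p(y\mid x^*, \mcD_i) = p_T(y\mid x^*)$. Substituting into the BCM formula \eqref{eq:prod_rule} gives
\begin{equation*}
  p_{\bcm}(y\mid x^*, \mcD) \;\propto\; \frac{\prod_{i=1}^{m} p_T(y\mid x^*)}{p_P^{m-1}(y\mid x^*)} \;=\; \frac{p_T^{\,m}(y\mid x^*)}{p_P^{\,m-1}(y\mid x^*)}.
\end{equation*}
Because $p_P$ is uniform over the $K$ classes, the denominator is a $y$-independent constant, and renormalization yields
\begin{equation*}
  p_{\bcm}(y\mid x^*, \mcD) \;=\; \frac{p_T^{\,m}(y\mid x^*)}{\sum_{y'} p_T^{\,m}(y'\mid x^*)}.
\end{equation*}

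The second step is to show that this tempering strictly increases the mass on the mode whenever the underlying distribution is non-degenerate. Taking $c$ to be the mode of $p_T(\cdot\mid x^*)$ (the natural reading of ``correct class'' for a well-calibrated predictor), the target inequality $p_{\bcm}(c\mid x^*, \mcD) > p_T(c\mid x^*)$ becomes, after clearing denominators and using $\sum_y p_T(y\mid x^*) = 1$,
\begin{equation*}
  \sum_{y} p_T(y\mid x^*)\bigl[p_T^{\,m-1}(c\mid x^*) - p_T^{\,m-1}(y\mid x^*)\bigr] \;>\; 0.
\end{equation*}
Each bracketed term is non-negative since $p_T(c\mid x^*) \geq p_T(y\mid x^*)$ and $m \geq 2$, and at least one term is strictly positive unless $p_T(\cdot\mid x^*)$ is uniform (a degenerate case excluded by the non-trivial calibration setting). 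The theorem then follows.

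The principal obstacle, and the only subtle point, is the identification of the ``correct class'' $c$ with the argmax of $p_T(\cdot\mid x^*)$. The stated desiderata only guarantee that each local learner converges to the well-calibrated $p_T$; they do not by themselves force the true label at a specific $x^*$ to be the mode. The cleanest remedy is to read this identification as an implicit hypothesis of the theorem, or equivalently to interpret the conclusion as ``BCM is overconfident about whichever class $p_T$ ranks first.'' Once this is in place, the remaining steps reduce to an exchange of indices, or equivalently to the convexity of $t \mapsto t^{m-1}$ on $[0,1]$, which delivers the strict inequality with no further work.
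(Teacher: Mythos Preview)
Your proposal is correct and follows essentially the same route as the paper: both reduce the BCM output to the tempered distribution $p_T^{\,m}(y\mid x^*)/\sum_{y'} p_T^{\,m}(y'\mid x^*)$ (using the uniform prior to drop the $p^{m-1}(y\mid x)$ factor) and then show that tempering strictly increases the mass on the mode. The only cosmetic difference is the final algebra---the paper divides numerator and denominator by $p_T^{\,m}(c\mid x^*)$ and bounds $\sum_i (p_i/p_c)^m < \sum_i p_i/p_c$, whereas you clear denominators and sum the bracketed differences---and you are somewhat more explicit than the paper about the implicit assumption that the ``correct class'' $c$ is the unique argmax of $p_T(\cdot\mid x^*)$.
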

This follows from the fact the local distributions are correct $p(y|x,\mcD_i) = p_T(y|x)$ and BCM multiplies them together to produce $p_{\bcm}(y|x,\mcD) \propto p^m_T(y|x)$, which will be more confident than the desired distribution $p_T(y|x)$.

For heterogeneous data, only one of the factors in the product distribution converges to $p_T(y|x)$, while the others still match the uniform prior $p_{P}(y|x)$. In this case, the product yields an accurate model.
\begin{restatable}[\textbf{BCM, heterogeneous, classification}]{theorem}{BCMHeteroClass}
Assuming well-calibrated local models, and an idealized heterogeneous partition, if $x^* \in R$ is a single test point near $\mcD_k$, then the BCM predicts the correct class with probability $p_{\bcm}(c | x^*, \mcD)  = p_T(c | x^*, \mcD)$.
\end{restatable}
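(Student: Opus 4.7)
The plan is to directly substitute the idealized heterogeneous assumptions into the BCM aggregation formula \eqref{eq:prod_rule} and show that the prior corrective factor exactly cancels the contributions from the $m-1$ shards that are far from $x^*$, leaving only the single well-calibrated local predictive that corresponds to the shard near $x^*$.

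First I would split the product $\prod_i p(y \mid x^*, \mcD_i)$ into the factor $i = k$ (the shard near $x^*$) and the factors $i \neq k$ (the distant shards). By assumption 1, the factor for $i = k$ equals $p_T(y \mid x^*)$, and by assumption 2, each factor for $i \neq k$ equals $p_P(y \mid x^*)$. Substituting this into \eqref{eq:prod_rule} gives
\begin{equation*}
p_{\bcm}(y \mid x^*, \mcD) \;\propto\; \frac{p_T(y \mid x^*)\,\prod_{i \neq k} p_P(y \mid x^*)}{p_P^{\,m-1}(y \mid x^*)} \;=\; p_T(y \mid x^*),
\end{equation*}
since there are exactly $m-1$ factors of $p_P$ in the numerator to match the $m-1$ in the denominator. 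Evaluating at $y = c$ and recognizing that $p_T(y \mid x^*)$ is itself already a normalized distribution then yields $p_{\bcm}(c \mid x^*, \mcD) = p_T(c \mid x^*, \mcD)$.

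The only subtlety, and what I would handle most carefully, is the normalization implicit in the ``$\propto$" of the BCM formula. I would note that because the prior $p_P$ is assumed uniform (a constant in $y$), the ratio of products above is pointwise equal to $p_T(y \mid x^*)$ up to a $y$-independent constant, so summing (or integrating) over $y$ gives a finite and in fact correct normalizer: the proportionality constant is $1$. This same argument works verbatim for either discrete classification or continuous outputs, as long as $p_P$ is uniform over the support. If one wanted to drop the uniform-prior assumption, one would still conclude that the BCM reproduces the true local predictive exactly, but the cancellation would need to be traced through a possibly non-constant $p_P(y \mid x^*)$, which is the only place the argument could get slightly more delicate.
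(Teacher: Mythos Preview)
Your proposal is correct and follows essentially the same argument as the paper: both split the product into the near shard $i=k$ (which gives $p_T$) and the $m-1$ far shards (which give $p_P$), then cancel the $m-1$ prior factors against the $p_P^{\,m-1}$ corrective term in the denominator. The paper works directly with the explicit uniform prior $1/K$ and treats the BCM formula as an equality, whereas you carry the $\propto$ and address normalization separately, but the substance is identical.
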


\noindent Similarly analyzing the mixture model, we see it is underconfident for heterogeneous data:
\begin{restatable}[\textbf{mixture model, heterogeneous, classification}]{theorem}{MixtureHeteroClass} 
Assuming well-calibrated local models, and an idealized heterogeneous partition, if $x^* \in R$ is a test point near $\mcD_k$, then the mixture predicts the correct class with probability $p_{\mix}(c | x^*, \mcD)  < p_T(c | x^*, \mcD)$.
\end{restatable}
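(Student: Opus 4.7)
My plan is to reduce the mixture on the correct class to a convex combination of the true predictive $p_T$ and the (uniform) prior $p_P$, and then use the assumption that $p_T$ is sharper than $p_P$ on the correct class to conclude a strict loss of confidence.

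First I would localize the sum. Under the idealized heterogeneous partition we have $k(x^*, x_i) \approx 0$ for every $x_i \in \mcD_i$ with $i \ne k$, while $x^*$ is near $\mcD_k$. Assumption (1) then yields $p(y \mid x^*, \mcD_k) = p_T(y \mid x^*)$, and assumption (2) yields $p(y \mid x^*, \mcD_i) = p_P(y \mid x^*)$ for every $i \ne k$. Substituting into the mixture \eqref{eq:mixture_model} with weights $w_i := |\mcD_i|/|\mcD|$, I get
\begin{align*}
p_{\mix}(y \mid x^*, \mcD) &= w_k\, p_T(y \mid x^*) + \sum_{i \ne k} w_i\, p_P(y \mid x^*) \\
&= w_k\, p_T(y \mid x^*) + (1 - w_k)\, p_P(y \mid x^*).
\end{align*}

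Next I would evaluate on the correct class $c$. With $p_P$ uniform over $C$ classes, $p_P(c \mid x^*) = 1/C$, so
\[
p_{\mix}(c \mid x^*, \mcD) - p_T(c \mid x^*) = (1 - w_k)\bigl(\tfrac{1}{C} - p_T(c \mid x^*)\bigr).
\]
Since there is more than one shard we have $w_k < 1$, and to conclude strict negativity I need $p_T(c \mid x^*) > 1/C$. This is exactly the content of the well-calibration assumption (3) specialized to the correct class: a meaningfully calibrated true model must assign the true class strictly greater than chance probability, which is precisely what makes $p_P$ (the uniform distribution, the maximum-entropy distribution) strictly less confident than $p_T$ on $c$. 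Chaining these inequalities gives $p_{\mix}(c \mid x^*, \mcD) < p_T(c \mid x^*)$ as required.

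The main obstacle is not the algebra, which is essentially immediate, but justifying $p_T(c \mid x^*) > 1/C$ from the stated desiderata. Assumption (3) only says that $p_P$ has higher entropy or uncertainty than $p_T$, which on its own does not pin down the marginal on the correct class. I would address this by arguing that a well-calibrated predictor on the correct class concentrates above the uniform baseline on $c$ (otherwise the label $c$ would not be ``correct'' in any informative sense), and by making explicit that the contraction of confidence caused by the $(1-w_k)$ dilution is the whole mechanism behind the mixture's underconfidence in the heterogeneous regime.
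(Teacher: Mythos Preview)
Your proof is correct and essentially identical to the paper's: both write $p_{\mix}(c\mid x^*,\mcD)=w_k\,p_T(c\mid x^*)+(1-w_k)/K$ and bound this strictly below $p_T(c\mid x^*)$ using $p_T(c\mid x^*)>1/K$ and $w_k<1$. The only difference is how the key inequality $p_T(c\mid x^*)>1/K$ is justified: rather than extracting it from desideratum (3), the paper simply stipulates at the outset of the classification analysis that the correct class $c$ satisfies $p_T(c\mid x^*)>p_T(i\mid x^*)$ for all $i\neq c$ (equivalently, ``otherwise the data point $x^*$ has no correct label''), from which $p_T(c\mid x^*)>1/K$ is immediate.
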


\noindent On the other hand, for homogeneous data, each predictive distribution is identical, and equal to the true distribution, so that the mixture returns the correct distribution $p_{\mix}(y|x, \mcD) = \sum_i (|\mcD_i|/|\mcD|) p_T(y|x) = p_T(y|x)$.
\begin{restatable}[\textbf{mixture model, homogeneous, classification}]{theorem}{MixtureHomoClass}
Assuming well-calibrated local models, and an idealized homogeneous partition, if $x^* \in R$ is a test point in the bounded region of the data, then the mixture predicts the correct class with probability $p_{\mix}(c | x^*, \mcD)  = p_T(c | x^*, \mcD)$.
\end{restatable}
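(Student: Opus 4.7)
The plan is to carry out a short direct calculation using only the definition of the predictive mixture from equation \eqref{eq:mixture_model} together with the well-calibrated local model assumptions stated just before the theorems. The key observation is that the idealized homogeneous partition makes every client ``see'' the same underlying distribution, so assumption 1 applies uniformly across all clients.

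First, I would invoke the homogeneity hypothesis: every $\mcD_i$ is sampled from the same distribution as $\mcD$, and the test point $x^* \in R$ lies in the interior of the data region, hence in the vicinity of each $\mcD_i$. Combined with assumption 1 (that each $\mcD_i$ is large enough for the local predictor to converge to the true model on in-domain points), this immediately gives $p(y \mid x^*, \mcD_i) = p_T(y \mid x^*)$ for every client $i$, not just for a particular nearby cluster as in the heterogeneous case.

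Second, I would plug this identity into the mixture definition \eqref{eq:mixture_model}:
\[
p_{\mix}(y \mid x^*, \mcD) \;=\; \sum_i \frac{|\mcD_i|}{|\mcD|}\, p(y \mid x^*, \mcD_i) \;=\; \Bigl(\sum_i \frac{|\mcD_i|}{|\mcD|}\Bigr) p_T(y \mid x^*) \;=\; p_T(y \mid x^*),
\]
since the mixture weights $|\mcD_i|/|\mcD|$ sum to one. Evaluating both sides at $y = c$ and noting that under assumption 1 the expression $p_T(c \mid x^*, \mcD)$ in the theorem statement coincides with $p_T(c \mid x^*)$ yields the claimed equality.

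There is essentially no real obstacle here: once assumption 1 is applied in the homogeneous regime, the result is a one-line computation exploiting that a convex combination of copies of the same distribution is that distribution. The only mildly subtle point worth flagging is justifying that $x^*$ is truly in the vicinity of \emph{every} $\mcD_i$ (not merely a distinguished one as in the heterogeneous theorem); this is exactly what distinguishes the homogeneous partition from the heterogeneous one and is where the asymmetry with Theorem~\ref{thm:MixtureHeteroClass} enters.
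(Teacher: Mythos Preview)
Your proposal is correct and matches the paper's own proof essentially line for line: both apply assumption~1 uniformly to every client under the homogeneous partition to get $p(y\mid x^*,\mcD_i)=p_T(y\mid x^*)$, then use that the mixture weights $|\mcD_i|/|\mcD|$ sum to one. Your additional remark about why $x^*$ lies in the vicinity of \emph{every} $\mcD_i$ is a helpful clarification that the paper leaves implicit.
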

\noindent We summarize the analysis results in Table \ref{tab:analysis_summ}.
\begin{table}
\centering
\scalebox{0.9}{
\begin{tabular}{ccc}
\toprule
& \textbf{Heterogeneous Data} & \textbf{Homogeneous Data}\\
\midrule
\textbf{BCM} & Calibrated & Overconfident  \\
\midrule
\textbf{Mixture} & Underconfident & Calibrated \\
\bottomrule
\end{tabular}
}
\caption{Summary of Predictive Variance Analysis.}
\label{tab:analysis_summ}
\end{table}

\section{Calibrating the Aggregated Model}

\begin{small}
\begin{figure*}[!htb]
    \centering
    \includegraphics[width=0.8\linewidth]{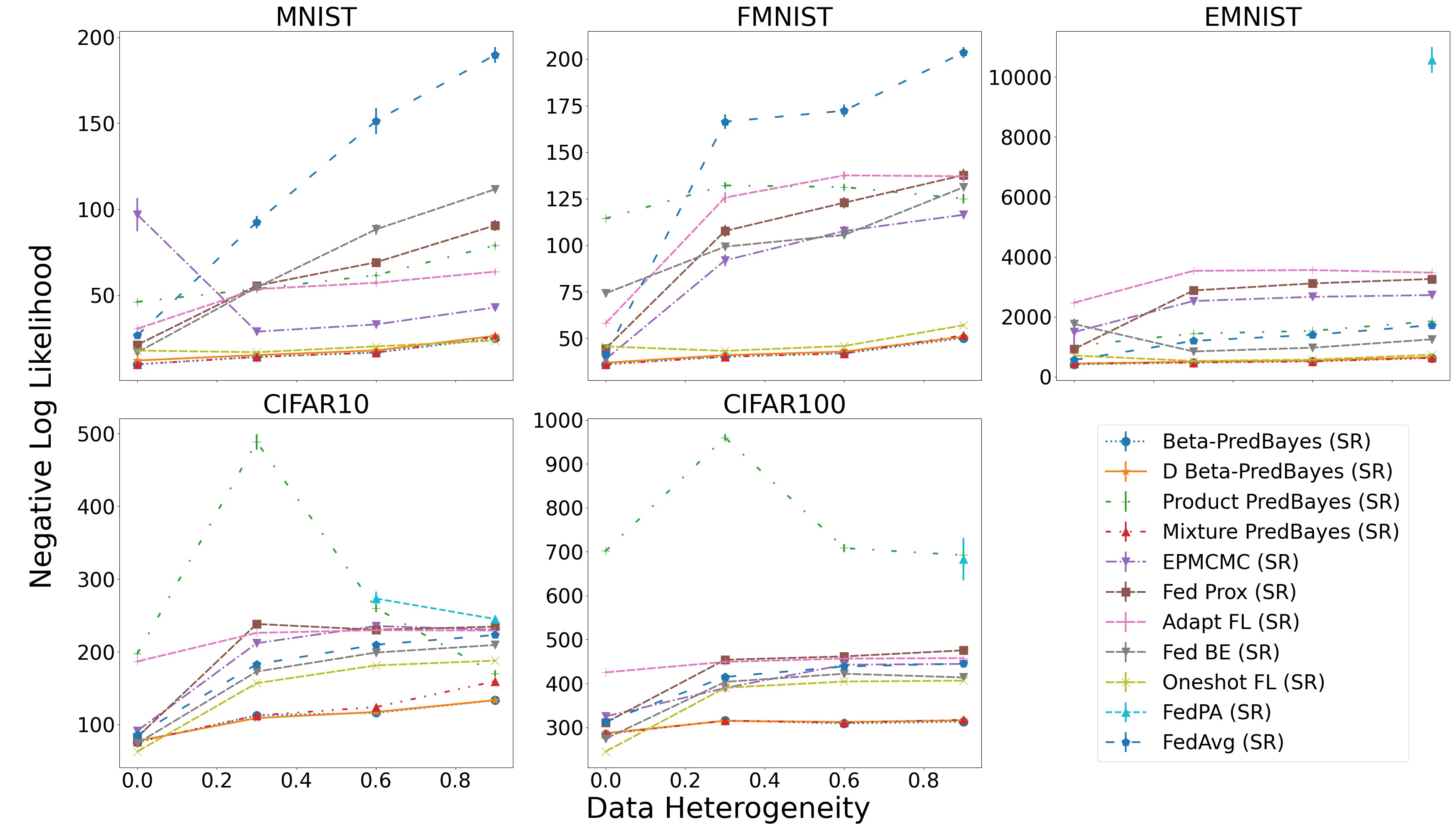}
    \caption{NLL on the classification datasets with increasing heterogeneity (tested with $h\in$\{0.0, 0.3, 0.6, 0.9\}). Averages and standard error over 10 seeds are reported. Omitted values (e.g., for FedPA on EMNIST) denote results where NLL diverged.}
    \label{fig:noniid_class_nllhd}
\end{figure*}
\end{small}
\begin{small}
\begin{figure*}[!htb]
    \centering
    \includegraphics[width=0.8\linewidth]{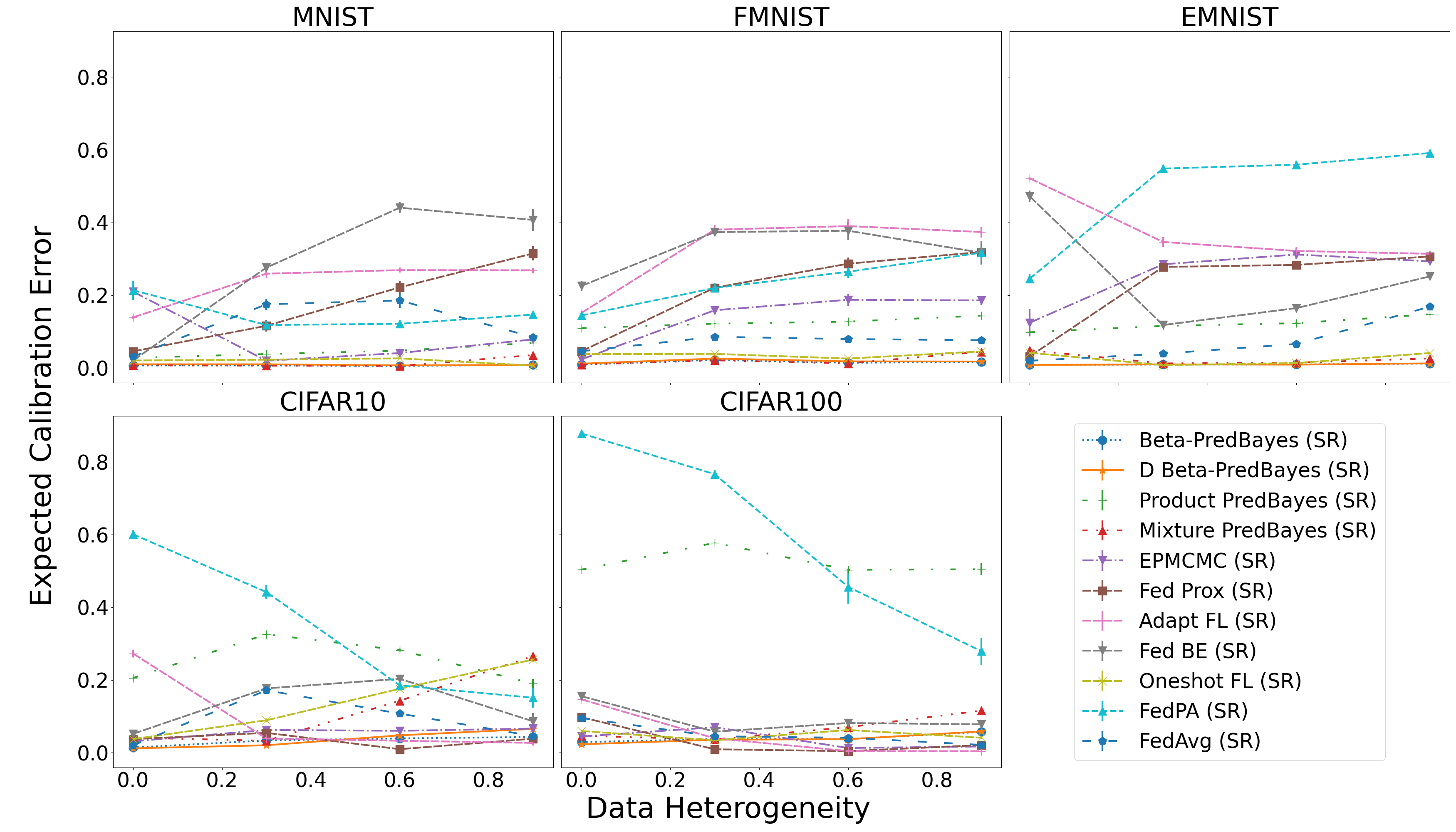}
    \caption{ECE on the classification datasets with increasing heterogeneity (tested with $h\in $\{0.0, 0.3, 0.6, 0.9\}). Averages and standard error over 10 seeds are reported.}
    \label{fig:noniid_class_cal}
\end{figure*}
\end{small}

The preceding analysis motivates us to combine the predictive mixture model \eqref{eq:mixture_model}, and the BCM (henceforth referred to as the ``product model") \eqref{eq:prod_rule}. Namely, to obtain the correct calibration in the predictive model, we should interpolate between the mixture, which is accurate for homogeneous data, and the product, which is accurate for heterogeneous data. For interpolation parameter $\beta$, the model, which we refer to as \emph{$\beta$-Predictive Bayes} is:
\begin{align}
    \label{eq:interp_model}
    \log p_{\beta}(y|x,\mcD) &= \beta \log \Big( \frac{1}{p(y|x)^{n-1}} \prod_i p(y | x,\calD_i) \Big) +\\ &(1-\beta) \log \Big( \sum_i \frac{|\mcD_i|}{|\mcD|} p(y|x,\mcD_i) \Big). \nonumber
\end{align}
The case of $\beta = 0.0$ corresponds to the mixture model, and $\beta = 1.0$ corresponds to the product model. 

In the case of regression, assuming Gaussian outputs for each local predictive distribution $p(y|x,\mcD_i)$, and using the mixture approximations \eqref{eq:mixture_gaussian}, the aggregated predictive distribution is approximately a Gaussian $p_{\beta}(y|x,\mcD) = \mathcal{N}(\mu_\beta(x), \sigma^2_{\beta}(x))$ with:
\begin{align}
    \label{eq:interp_gaussian}
    \sigma^{-2}_{\beta}(x) &= \beta\cdot \sigma^{-2}_{\prd} + (1-\beta) \cdot \sigma^{-2}_{\mix}(x)  \\
    \mu_{\beta}(x) &= \sigma^2_{\beta}(x) \big( \beta \cdot \sigma^{-2}_{\prd}(x) \mu_{\prd}(x) \nonumber\\&+ (1-\beta) \cdot \sigma^{-2}_{\mix}(x) \mu_{\mix}(x) \big). \nonumber 
\end{align}
In other words, the inverse-variance (or precision) interpolates between that of the product and mixture distributions.

We learn $\beta$ by minimizing the negative log-likelihood of a dataset on the server, $\mathcal{U}$ using a gradient-based optimizer.
\begin{align}
    \beta^* = \argmin_{\beta} \sum_{(x,y) \in \mathcal{U}} -\log p_{\beta}(y|x,\mcD). 
\end{align}
For regression, the Gaussian negative log-likelihood can be used for training (by approximating $p_{\beta}(y|x,\mcD)$ with \eqref{eq:interp_gaussian}).

By tuning $\beta$ this way, we don't need to know where we are along the homogeneous-heterogeneous partition spectrum. Since we are only tuning a single scalar parameter, a small dataset may suffice. Furthermore, we can use $\mathcal{U}$ as a distillation dataset to compress the ensemble into one model. The steps for $\beta$-predBayes are presented in Algorithm \ref{alg:dBpredbayes}. 

\begin{algorithm}[tb]
\caption{Distilled $\beta$-PredBayes}
\label{alg:dBpredbayes}
\textbf{Input}: Client datasets $\mcD_i$, sampler $\mathrm{MCMC\_sample}$\\
\textbf{Output}: Model $\theta^*$
\begin{algorithmic} 
\FOR {each client $i$}
\STATE $\{\theta\}_i = \mathrm{MCMC\_sample}(\mcD_i)$  \hfill\COMMENT{//step 1}
\STATE Communicate $\{\theta\}_i$ to server \hfill\COMMENT{//step 2}
\ENDFOR

\STATE At Server: 
\STATE $\hat{p}(y|x,\mcD_i) = \frac{1}{|\{\theta\}_i|}\sum_{\theta \in \{\theta\}_i} p(y|x, \theta) $ \hfill \COMMENT{//step 3}
\STATE $\hat{p}_{\beta}(y|x, \mcD) = \mathrm{Aggregate}( \hat{p}(y|x,\mcD_i ))$ \hfill \COMMENT{//step 4, Eq.~\ref{eq:interp_model}}
\STATE $\beta^* = \argmin_{\beta} \sum -\log \hat{p}_{\beta}(y|x,\mcD)$ \hfill \COMMENT{//step 5, tune $\beta$}
\STATE $\theta^* = \mathrm{Distill}(\hat{p}_{\beta^*}(y|x,\mcD))$ \hfill\COMMENT{//step 6} 
\STATE \textbf{return} $\theta^*$
\end{algorithmic}
\end{algorithm}

\section{Experiments}

\begin{small}
\begin{table*}[!htb] 
    \centering
    \label{tab:regr_nll}
    \setlength{\tabcolsep}{5pt}
   \scalebox{0.98}{
    \begin{tabular}{lclllll}
        \toprule
        Method & \multicolumn{1}{c}{Air Quality} & \multicolumn{1}{c}{Bike} & \multicolumn{1}{c}{Wine Quality} & \multicolumn{1}{c}{Real Estate} & \multicolumn{1}{c}{Forest Fire} \\
        \midrule
        EP-MCMC & 11.20 \small{$\pm$ 0.42}      & 1.92 \small{$\pm$ 0.12}  & 2.93 \small{$\pm$ 0.09} & 1.90 \small{$\pm$ 0.29}  &    1.90 \small{$\pm$ 0.09} \\
        FedBE & 6.34 \small{$\pm$ 0.03} & 0.94 \small{$\pm$ 0.05}  & 2.13 \small{$\pm$ 0.01} &  0.49 \small{$\pm$ 0.01}  &     1.39 \small{$\pm$ 0.01} \\
        OneshotFL & 6.69 \small{$\pm$ 0.05} & 0.98 \small{$\pm$ 0.05} & 2.17 \small{$\pm$ 0.01} & 0.53 \small{$\pm$ 0.01}  &    1.39\small{ $\pm$ 0.01} \\
        Mixture & 9.45 \small{$\pm$ 0.02}   & 1.22 \small{$\pm$ 0.02} & 2.57 \small{$\pm$ 0.03} & 0.65 \small{$\pm$ 0.01}  &    \textbf{1.39}\small{ $\pm$ 0.01} \\
        Product &  10.17 \small{$\pm$ 0.08}   & 1.47 \small{$\pm$ 0.02} & 3.06 \small{$\pm$ 0.10} & 0.83 \small{$\pm$ 0.03}  &    2.57 \small{$\pm$ 0.03} \\
        \midrule
        $\beta$-PredBayes (ours) & 7.05\small{ $\pm$ 0.04} &  0.95 \small{$\pm$ 0.01}  & 2.05 \small{$\pm$ 0.05}  &  0.53 \small{$\pm$ 0.03}  & 1.55\small{ $\pm$ 0.01}$^{*}$ \\
        D-$\beta$-PredBayes (ours) & \textbf{4.53} \small{$\pm$ 0.12} &  \textbf{0.32} \small{ $\pm$ 0.05}  & \textbf{1.34} \small{$\pm$ 0.03}    &    \textbf{0.18} \small{$\pm$ 0.02} & 1.56 \small{$\pm$ 0.03} \\
        \bottomrule
    \end{tabular}
    }
     \caption{Average NLL ($\pm$ standard error) on regression datasets, based on 10 seeds. Lower is better. All values are statistically significant relative to D-$\beta$-PredBayes ($p<5\%$) according to the Wilcoxon signed-rank test, except those marked with $*$.} 
\end{table*}
\end{small}

We evaluate $\beta$-PredBayes on a number of regression and classification datasets. All tests used 5 clients, with a distillation set composed of $20\%$ of the original training set. Further experimental details, such as the models, and hyperparameters, are included in the appendix.

Since the goal is to evaluate the calibration of the predictions (not just their accuracy), we measure negative-log-likelihood (NLL) on the test set. This evaluates calibration since NLL should be minimized for the correct prediction probabilities \cite{guo2017calibration}. For classification, we may also use the expected calibration error (ECE), which measures the difference between the confidence (probabilities) assigned to predicted classes, and the fraction that are predicted correctly (see the appendix for more details).

\subsection{Classification Dataset Setup} The method was evaluated for classification on the following datasets: MNIST~\citep{726791}, Fashion MNIST~\citep{xiao2017/online}, EMNIST~\citep{cohen2017emnist} (using a split with 62 classes), CIFAR10 and CIFAR100~\citep{krizhevsky2009learning}. Each of these datasets is distributed among clients based on a ``heterogeneity parameter", $h$. For $h=0$, the data is split uniformly among the clients, and is thus homogeneously distributed. For $h=1$, the data is sorted by class before being split among clients. This means that each client observes data from different classes with little overlap. In this sense, $h=1$ represents the ``fully heterogeneous" setting. For $0 < h < 1$, data from the above extremes is mixed: a fraction of size $h$ of the homogeneous data shard is replaced with the fully heterogeneous data for each client.

\subsection{Classification Results}

We evaluate $\beta$-PredBayes, in both its distilled (listed as D-$\beta$-PredBayes) and non-distilled forms, along with the product (BCM) and mixture models. We also evaluate several baselines, run for one communication round. 
The results for NLL over different settings of the heterogeneity are plotted in Figure \ref{fig:noniid_class_nllhd}, while ECE results are shown in Figure \ref{fig:noniid_class_cal}. 

For the NLL, the $\beta$-PredBayes model (and its distilled variant) perform best (least NLL), followed by the Mixture model, on the tested datasets. As heterogeneity increases, the NLL loss generally increases for other methods, while it largely remains stable for $\beta$-PredBayes. For the ECE, as heterogeneity increases, the metric jumps more erratically for some methods (which may be due ECE's sensitivity to hyperparameters like bin count). But the overall trend is still that $\beta$-PredBayes performs best (with lowest ECE), followed by its distilled variant, followed by the mixture model. We note that for ECE, $\beta$-PredBayes outperforms the mixture model for high heterogeneity setting ($h=0.9$), which is expected from the analysis that predicted that mixture models would not be well-calibrated in this setting.

\subsection{Regression Dataset Setup} The regression datasets used for evaluation include: the ``wine quality"~\citep{cortez2009modeling}, ``air quality"~\citep{de2008field}, ``forest fire"~\citep{cortez2007data}, ``real estate"~\citep{yeh2018building}, and ``bike rental"~\citep{fanaee2013event} datasets from the UCI repository~\citep{uci}. These datasets were sorted according to certain features (such as the date, for ``airquality"), then split among clients, to simulate heterogeneous data.

\subsection{Regression Results}

For regression, $\beta$-predBayes was evaluated, along with the product and mixture models as well as other baselines that use an ensemble for predictive inference (since these yield a predictive variance). The resulting (Gaussian) NLL for heterogeneous data are shown in Table 2. For all datasets except ``Forest Fire," the distilled form of $\beta$-PredBayes performs best in NLL. Note that it is possible that the distilled version outperforms the non-distilled one due to some regularization effect of having a smaller model.

\section{Conclusion and Future Work}

This work presented $\beta$-Predictive Bayes, an algorithm that aggregates local Bayesian predictive posteriors using a tunable parameter $\beta$, and then distills the resulting model. The parameter $\beta$ enables accurate calibration performance. Owing to its Bayesian nature, the method is also well suited for heterogeneous FL, and requires only a single communication round. We perform empirical evaluation on various classification and regression datasets to show the competence of our methods in heterogeneous settings. Our work reinforces that Bayesian learning can provide well-calibrated models in heterogeneous settings with efficient communication.

Some future directions to improve our work include:

\begin{itemize}
    \item \textbf{Personalized FL.} Our proposed approach learns a single global model for all clients. When clients have different class-conditional distributions $p(y|x)$, a personalized model can be more desirable for each client. 

    \item \textbf{Privacy.} The proposed method transmits weight samples (obtained by MCMC) to the server. Although no data is shared, information about the data could be leaked via the model weights unless a theoretically proven private mechanism is used or encryption is applied. This trait is shared by other techniques (e.g., FedAvg, FedPA). It would be interesting to explore rigorously private methods, such as differentially private sampling mechanisms~\cite{dimitrakakis2017differential}. 

    \item \textbf{Server dataset.} $\beta$-Predictive Bayes uses public data stored at the server for distillation and tuning $\beta$. When a public dataset is not available, it may need to be synthetically generated. It would be better to avoid this generation and to develop a data-free technique for distillation and learning $\beta$ in the future. 
\end{itemize}

\section*{Acknowledgments}

Resources used in preparing this research at the University of Waterloo were provided by Huawei Canada, the Natural Sciences and Engineering Research Council of Canada, the Province of Ontario, the Government of Canada through CIFAR, and companies sponsoring the Vector Institute.

\appendix

\bibliography{aaai24}

\begin{thebibliography}{38}
\providecommand{\natexlab}[1]{#1}

\bibitem[{Al-Shedivat et~al.(2021)Al-Shedivat, Gillenwater, Xing, and Rostamizadeh}]{al-shedivat2021federated}
Al-Shedivat, M.; Gillenwater, J.; Xing, E.; and Rostamizadeh, A. 2021.
\newblock Federated Learning via Posterior Averaging: A New Perspective and Practical Algorithms.
\newblock In \emph{International Conference on Learning Representations}.

\bibitem[{Bishop(2006)}]{bishop_ml}
Bishop, C.~M. 2006.
\newblock \emph{Pattern Recognition and Machine Learning (Information Science and Statistics)}.
\newblock Berlin, Heidelberg: Springer-Verlag.
\newblock ISBN 0387310738.

\bibitem[{Chen and Chao(2021)}]{chen2021fedbe}
Chen, H.-Y.; and Chao, W.-L. 2021.
\newblock Fed{\{}BE{\}}: Making Bayesian Model Ensemble Applicable to Federated Learning.
\newblock In \emph{International Conference on Learning Representations}.

\bibitem[{Choi and Schervish(2007)}]{choiGPconsistency}
Choi, T.; and Schervish, M.~J. 2007.
\newblock On posterior consistency in nonparametric regression problems.
\newblock \emph{Journal of Multivariate Analysis}, 98(10): 1969--1987.

\bibitem[{Cohen et~al.(2017)Cohen, Afshar, Tapson, and Van~Schaik}]{cohen2017emnist}
Cohen, G.; Afshar, S.; Tapson, J.; and Van~Schaik, A. 2017.
\newblock EMNIST: Extending MNIST to handwritten letters.
\newblock In \emph{2017 international joint conference on neural networks (IJCNN)}, 2921--2926. IEEE.

\bibitem[{Cortez et~al.(2009)Cortez, Cerdeira, Almeida, Matos, and Reis}]{cortez2009modeling}
Cortez, P.; Cerdeira, A.; Almeida, F.; Matos, T.; and Reis, J. 2009.
\newblock Modeling wine preferences by data mining from physicochemical properties.
\newblock \emph{Decision support systems}, 47(4): 547--553.

\bibitem[{Cortez and Morais(2007)}]{cortez2007data}
Cortez, P.; and Morais, A. d. J.~R. 2007.
\newblock A data mining approach to predict forest fires using meteorological data.

\bibitem[{De~Vito et~al.(2008)De~Vito, Massera, Piga, Martinotto, and Di~Francia}]{de2008field}
De~Vito, S.; Massera, E.; Piga, M.; Martinotto, L.; and Di~Francia, G. 2008.
\newblock On field calibration of an electronic nose for benzene estimation in an urban pollution monitoring scenario.
\newblock \emph{Sensors and Actuators B: Chemical}, 129(2): 750--757.

\bibitem[{Deisenroth and Ng(2015)}]{rbcm}
Deisenroth, M.~P.; and Ng, J.~W. 2015.
\newblock Robust Bayesian Committee Machine for Large-Scale Gaussian Processes.
\newblock In \emph{Large-Scale Kernel Machines Workshop at ICML 2015}.

\bibitem[{Dimitrakakis et~al.(2017)Dimitrakakis, Nelson, Zhang, Mitrokotsa, and Rubinstein}]{dimitrakakis2017differential}
Dimitrakakis, C.; Nelson, B.; Zhang, Z.; Mitrokotsa, A.; and Rubinstein, B.~I. 2017.
\newblock Differential privacy for Bayesian inference through posterior sampling.
\newblock \emph{Journal of machine learning research}, 18(11): 1--39.

\bibitem[{Dua and Graff(2017)}]{uci}
Dua, D.; and Graff, C. 2017.
\newblock {UCI} Machine Learning Repository.

\bibitem[{Fanaee-T and Gama(2013)}]{fanaee2013event}
Fanaee-T, H.; and Gama, J. 2013.
\newblock Event labeling combining ensemble detectors and background knowledge.
\newblock \emph{Progress in Artificial Intelligence}, 1--15.

\bibitem[{Guha, Talwalkar, and Smith(2019)}]{guha2019}
Guha, N.; Talwalkar, A.; and Smith, V. 2019.
\newblock One-Shot Federated Learning.

\bibitem[{Guo et~al.(2017)Guo, Pleiss, Sun, and Weinberger}]{guo2017calibration}
Guo, C.; Pleiss, G.; Sun, Y.; and Weinberger, K.~Q. 2017.
\newblock On calibration of modern neural networks.

\bibitem[{Hinton, Vinyals, and Dean(2015)}]{Hinton2015DistillingTK}
Hinton, G.~E.; Vinyals, O.; and Dean, J. 2015.
\newblock Distilling the Knowledge in a Neural Network.
\newblock \emph{ArXiv}, abs/1503.02531.

\bibitem[{Kingma and Ba(2015)}]{adam}
Kingma, D.~P.; and Ba, J. 2015.
\newblock Adam: {A} Method for Stochastic Optimization.
\newblock In Bengio, Y.; and LeCun, Y., eds., \emph{3rd International Conference on Learning Representations, {ICLR} 2015, San Diego, CA, USA, May 7-9, 2015, Conference Track Proceedings}.

\bibitem[{Krizhevsky, Hinton et~al.(2009)}]{krizhevsky2009learning}
Krizhevsky, A.; Hinton, G.; et~al. 2009.
\newblock Learning multiple layers of features from tiny images.

\bibitem[{Lecun et~al.(1998)Lecun, Bottou, Bengio, and Haffner}]{726791}
Lecun, Y.; Bottou, L.; Bengio, Y.; and Haffner, P. 1998.
\newblock Gradient-based learning applied to document recognition.
\newblock \emph{Proceedings of the IEEE}, 86(11): 2278--2324.

\bibitem[{Li, He, and Song(2021)}]{practicalOneShot2021}
Li, Q.; He, B.; and Song, D. 2021.
\newblock Practical One-Shot Federated Learning for Cross-Silo Setting.
\newblock In Zhou, Z.-H., ed., \emph{Proceedings of the Thirtieth International Joint Conference on Artificial Intelligence, {IJCAI-21}}, 1484--1490. International Joint Conferences on Artificial Intelligence Organization.
\newblock Main Track.

\bibitem[{Li et~al.(2020{\natexlab{a}})Li, Sahu, Zaheer, Sanjabi, Talwalkar, and Smith}]{li2020fedprox}
Li, T.; Sahu, A.~K.; Zaheer, M.; Sanjabi, M.; Talwalkar, A.; and Smith, V. 2020{\natexlab{a}}.
\newblock Federated optimization in heterogeneous networks.
\newblock \emph{Proceedings of Machine Learning and Systems}, 2: 429--450.

\bibitem[{Li et~al.(2020{\natexlab{b}})Li, JIANG, Zhang, Kamp, and Dou}]{li2020fedbn}
Li, X.; JIANG, M.; Zhang, X.; Kamp, M.; and Dou, Q. 2020{\natexlab{b}}.
\newblock {FedBN}: Federated Learning on Non-{IID} Features via Local Batch Normalization.
\newblock In \emph{International Conference on Learning Representations}.

\bibitem[{Liu et~al.(2018)Liu, Cai, Wang, and Ong}]{liu18a}
Liu, H.; Cai, J.; Wang, Y.; and Ong, Y.~S. 2018.
\newblock Generalized Robust {B}ayesian Committee Machine for Large-scale {G}aussian Process Regression.
\newblock In \emph{ICML}, 3131--3140.

\bibitem[{McMahan et~al.(2017)McMahan, Moore, Ramage, Hampson, and Arcas}]{mcmahan17a}
McMahan, B.; Moore, E.; Ramage, D.; Hampson, S.; and Arcas, B. A.~y. 2017.
\newblock {Communication-Efficient Learning of Deep Networks from Decentralized Data}.
\newblock In Singh, A.; and Zhu, J., eds., \emph{Proceedings of the 20th International Conference on Artificial Intelligence and Statistics}, volume~54 of \emph{Proceedings of Machine Learning Research}, 1273--1282. PMLR.

\bibitem[{Mohri, Sivek, and Suresh(2019)}]{mohri2019agnostic}
Mohri, M.; Sivek, G.; and Suresh, A.~T. 2019.
\newblock Agnostic federated learning.
\newblock In \emph{International Conference on Machine Learning}, 4615--4625. PMLR.

\bibitem[{Mukhoti et~al.(2021)Mukhoti, Kirsch, van Amersfoort, Torr, and Gal}]{mukhoti2021deterministic}
Mukhoti, J.; Kirsch, A.; van Amersfoort, J.; Torr, P.~H.; and Gal, Y. 2021.
\newblock Deterministic Neural Networks with Appropriate Inductive Biases Capture Epistemic and Aleatoric Uncertainty.
\newblock \emph{arXiv preprint arXiv:2102.11582}.

\bibitem[{Neiswanger, Wang, and Xing(2014)}]{ep_mcmc}
Neiswanger, W.; Wang, C.; and Xing, E.~P. 2014.
\newblock Asymptotically Exact, Embarrassingly Parallel {MCMC}.
\newblock In \emph{Proceedings of the Thirtieth Conference on Uncertainty in Artificial Intelligence}, UAI'14, 623–632. Arlington, Virginia, USA: AUAI Press.
\newblock ISBN 9780974903910.

\bibitem[{Pourzanjani, Jiang, and Petzold(2017)}]{pourzanjani2017improving}
Pourzanjani, A.~A.; Jiang, R.~M.; and Petzold, L.~R. 2017.
\newblock Improving the identifiability of neural networks for {Bayesian} inference.
\newblock In \emph{{NeurIPS} Workshop on {Bayesian} Deep Learning}.

\bibitem[{Rasmussen and Williams(2006)}]{rasmussenGP}
Rasmussen, C.~E.; and Williams, C. K.~I. 2006.
\newblock \emph{Gaussian processes for machine learning}.
\newblock Adaptive computation and machine learning. {MIT} Press.
\newblock ISBN 026218253X.

\bibitem[{Reddi et~al.(2021)Reddi, Charles, Zaheer, Garrett, Rush, Kone{\v{c}}n{\'y}, Kumar, and McMahan}]{reddi2021adaptive}
Reddi, S.~J.; Charles, Z.; Zaheer, M.; Garrett, Z.; Rush, K.; Kone{\v{c}}n{\'y}, J.; Kumar, S.; and McMahan, H.~B. 2021.
\newblock Adaptive Federated Optimization.
\newblock In \emph{International Conference on Learning Representations}.

\bibitem[{Tresp(2000)}]{tresp2000bcm}
Tresp, V. 2000.
\newblock A Bayesian committee machine.
\newblock \emph{Neural computation}, 12(11): 2719--2741.

\bibitem[{Wang et~al.(2019)Wang, Yurochkin, Sun, Papailiopoulos, and Khazaeni}]{wang2019federated}
Wang, H.; Yurochkin, M.; Sun, Y.; Papailiopoulos, D.; and Khazaeni, Y. 2019.
\newblock Federated Learning with Matched Averaging.
\newblock In \emph{International Conference on Learning Representations}.

\bibitem[{Wang et~al.(2020)Wang, Liu, Liang, Joshi, and Poor}]{wang2020tackling}
Wang, J.; Liu, Q.; Liang, H.; Joshi, G.; and Poor, H.~V. 2020.
\newblock Tackling the objective inconsistency problem in heterogeneous federated optimization.
\newblock \emph{Advances in neural information processing systems}, 33: 7611--7623.

\bibitem[{Woodworth, Patel, and Srebro(2020)}]{fedAvgDivHetero}
Woodworth, B.~E.; Patel, K.~K.; and Srebro, N. 2020.
\newblock Minibatch vs Local {SGD} for Heterogeneous Distributed Learning.
\newblock \emph{CoRR}, abs/2006.04735.

\bibitem[{Xiao, Rasul, and Vollgraf(2017)}]{xiao2017/online}
Xiao, H.; Rasul, K.; and Vollgraf, R. 2017.
\newblock Fashion-MNIST: a Novel Image Dataset for Benchmarking Machine Learning Algorithms.

\bibitem[{Yeh and Hsu(2018)}]{yeh2018building}
Yeh, I.-C.; and Hsu, T.-K. 2018.
\newblock Building real estate valuation models with comparative approach through case-based reasoning.
\newblock \emph{Applied Soft Computing}, 65: 260--271.

\bibitem[{Zhang et~al.(2020)Zhang, Li, Zhang, Chen, and Wilson}]{zhang2020csgmcmc}
Zhang, R.; Li, C.; Zhang, J.; Chen, C.; and Wilson, A.~G. 2020.
\newblock Cyclical Stochastic Gradient {MCMC} for {Bayesian} Deep Learning.
\newblock \emph{International Conference on Learning Representations}.

\bibitem[{Zhao et~al.(2018)Zhao, Li, Lai, Suda, Civin, and Chandra}]{zhao2018federated}
Zhao, Y.; Li, M.; Lai, L.; Suda, N.; Civin, D.; and Chandra, V. 2018.
\newblock Federated learning with non-iid data.
\newblock \emph{arXiv preprint arXiv:1806.00582}.

\bibitem[{Zheng et~al.(2022)Zheng, Zhou, Sun, Wang, Liu, and Li}]{zheng2022applications}
Zheng, Z.; Zhou, Y.; Sun, Y.; Wang, Z.; Liu, B.; and Li, K. 2022.
\newblock Applications of federated learning in smart cities: recent advances, taxonomy, and open challenges.
\newblock \emph{Connection Science}, 34(1): 1--28.

\end{thebibliography}

\setcounter{secnumdepth}{1}
\setcounter{lemma}{0}
\setcounter{theorem}{0}

\section{Proofs of Theorems}

We provide proofs for the theorems in the main text. 
Recall the BCM aggregation equation as:

\begin{align*}
    p(y|x,\mcD) &= \frac{1}{p^{m-1}(y|x)} \prod_{i} p(y|x, \mcD_i) 
\end{align*}
which, in the case of Gaussian local predictive posteriors $p(y|x,\mcD_i) = \mathcal{N}(\mu_i, \Sigma_i)$ simplifies to computing the Gaussian global predictions $p(y|x,\mcD) = \mathcal{N}(\mu_g, \Sigma_g)$:
\begin{align*}
    \Sigma_g &= \left(\sum_i \Sigma^{-1}_i - (n-1)\Sigma^{-1}_p \right)^{-1} \\
    \mu_g &= \Sigma_g \left(\sum_i \Sigma^{-1}_i \mu_i - (n-1)\Sigma^{-1}_p\mu_p\right). 
\end{align*}
\noindent
The mixture aggregation is:
\begin{equation*}
    \sum_i \frac{|\mcD_i|}{|\mcD|} p(y|x,\mcD_i)
\end{equation*}
which we approximate with an overall Gaussian prediction $\mathcal{N}(\mu_{\mix}(x), \sigma^2_{\mix}(x))$:
\begin{align*}
    \mu_{\mix}(x) &= \sum_i \frac{|\mcD_i|}{|\mcD|} \mu_i(x) \\
    \sigma^2_{\mix}(x) &=\sum_i \frac{|\mcD_i|}{|\mcD|} (\sigma^2_i(x) + \mu^2_i(x)) - \mu^2_{\mix}(x).  
\end{align*}

\subsection{Gaussian Process Analysis}

We assume a smooth, isotropic kernel function $k(x,x') = k(||x - x'||)$. We assume a model with Gaussian noise, i.e., $y = f(x) + \epsilon$  where $\epsilon \sim \mathcal{N}(0, \sigma^2_o)$.

On some dataset $\mcD = (\mathbf{X}, \mathbf{y})$, and with test point $x^*$, the Gaussian process inference equations predict a mean and variance for $p(y^*|x^*, \mcD_i) = \mathcal{N}(\mu(x^*), \sigma^2(x^*))$. If we denote $\mathbf{k}_* = [k(x^*, x_1), ..., k(x^*, x_{|\mcD|})]^\top$ the vector of kernel evaluations between the test point $x^*$ and the points in the dataset $\mcD$, and $\mathbf{K}$ as the kernel matrix (where $K_{i,j} = k(x_i, x_j)$ for $x_i, x_j \in \mcD$) then the inference equations are \cite{rasmussenGP}:
\begin{align}
    \label{eq:gp_inference}
    \mu(x^*) &= \mathbf{k}^\top_* (\mathbf{K} + \sigma^2_o I)^{-1} \mathbf{y}\\
    \sigma^2(x^*) &= \sigma^2_o + k(x^*, x^*) -  \mathbf{k}^\top_* (\mathbf{K} + \sigma^2_o I)^{-1}\mathbf{k}_*
\end{align}
\noindent We assume the data lies in some bounded set $R$. Starting with the lemmas in the main text:

\LemmaOne*

\LemmaTwo*

\begin{proof}
    The inference equation \ref{eq:gp_inference} for the predictive variance reads:
    
    \begin{align*}
        \sigma^2(x^*) &\approx \sigma^2_o + k(x^*, x^*) - \mathbf{0}^\top(\mathbf{K} + \sigma^2_oI)^{-1}\mathbf{0} \\
        &\approx \sigma^2_o + k(x^*, x^*) 
    \end{align*}
    
    Similarly, the predictive mean is $\mu(x^*) \approx \mathbf{0}^\top (\mathbf{K} + \sigma^2_o I)^{-1} \mathbf{y} = 0$.
    \end{proof}

\subsubsection{BCM}

The primary theorems establishing the calibration performance of the BCM and mixture models are:

\BCMHomogeneous*

\begin{proof}
    According to the BCM aggregation equation for regression:

    \begin{align*}
         \sigma^{-2}_{\bcm}(x^*) &= \sum_i \sigma^{-2}_i(x^*) - (m-1)\sigma^{-2}_p(x^*) 
    \end{align*}
    Where $\sigma^{-2}_i(x^*)$ is the inverse of the predictive variance output by the GP trained on $\mcD_i$. As the size of the dataset $\mcD_i$ increases, from Lemma \ref{lemma:obs_var} we know that $\sigma_i(x^*)$ will converge to $\sigma^2_o$. On the other hand $\sigma^2_p = k(x^*, x^*) + \sigma^2_o$. Combining these facts (in the limit of increasing data points):

    \begin{align*}
         \sigma^{-2}_{\bcm}(x^*) &= m \sigma^{-2}_o - (m-1)(\sigma^{2}_o + k(x^*, x^*))^{-1} \\
                                 &> m \sigma^{-2}_o - (m-1)\sigma^{-2}_o \\
                                 &= \sigma^{-2}_o
    \end{align*}
    Where the second inequality follows from the fact that $k(x^*, x^*) > 0$ (for a positive kernel function).
    
    This implies $\sigma^2_{\bcm} < \sigma^2_o$ as claimed. 
    
    Furthermore, for a kernel function with a large prior precision $\sigma^{-2}_p \approx 0$ (which is typical in practice when using an uninformative prior), we see that: 
    \begin{align*}
        \sigma^{-2}_{\bcm} &= m \sigma^{-2}_o - (m-1) \sigma^{-2}_p \\
                           &\approx m \sigma^{-2}_o.
    \end{align*}

So we end up underestimating the predictive variance by a factor of $m$, for $m$ clients.
\end{proof}

\BCMHeterogeneous*

\begin{proof}
    The BCM aggregation equation for regression is:

    \begin{align*}
         \sigma^{-2}_{\bcm}(x^*) &= \sum_i \sigma^{-2}_i(x^*) - (m-1)\sigma^{-2}_p(x^*) 
    \end{align*}

    Since $x^*$ is in the vicinity of some data subset $\mcD_k$, we can apply Lemma \ref{lemma:obs_var} to get $\sigma^{-2}_k(x^*) = \sigma^{-2}_o$. On the other hand, since the data subsets are split up heterogeneously, the test point $x^*$ is far from the other data sub-sets such that we can apply Lemma \ref{lemma:prior_var} to obtain $\sigma^{-2}_i(x^*) = \sigma^{-2}_p(x^*)$ for all $i\neq k$. Plugging these into the above expression yields:

    \begin{align*}
         \sigma^{-2}_{\bcm}(x^*) &=  \sigma^{-2}_k(x^*) + \sum_{i\neq k} \sigma^{-2}_i (x^*)  - (m-1)\sigma^{-2}_p(x^*) \\
         &= \sigma^{-2}_o + (m-1) \sigma^{-2}_p(x^*) - (m-1)\sigma^{-2}_p(x^*) \\
         &= \sigma^{-2}_o.
    \end{align*}

\end{proof}

\subsubsection{Mixture Model}

The following theorems analyze the calibration performance of the mixture model.

\MixtureHomogeneous*
\begin{proof}

In this setting, appealing to Lemma \ref{lemma:obs_var}, each local predictor outputs $\mathcal{N}(\mu_i(x^*) = f(x^*), \sigma^2_i(x^*) = \sigma^2_o)$. Therefore we have:
\begin{align*}
    \sigma^{2}_{\mix} &=  \sum_i \frac{|\mcD_i|}{|\mcD|} (\sigma^2_i(x^*) + \mu^2_i(x^*)) - \mu^2_{\mix}(x^*) \\
                      &=  \sum_i \frac{|\mcD_i|}{|\mcD|} (\sigma^2_o + f^2(x^*)) - (\sum_i \frac{|\mcD_i|}{|\mcD|}f(x^*))^2 \\
                      &= \sigma^2_o + f^2(x^*) - f^2(x^*) \\
                      &= \sigma^2_o.                      
\end{align*}
\end{proof}
\MixtureHeterogeneous*

\begin{proof}
    The mixture models predictive variance is:

    \begin{align*}
    \sigma^2_{\mix}(x^*) &= \sum_i \frac{|\mcD_i|}{|\mcD|} (\sigma^2_i(x^*) + \mu^2_i(x^*)) - \mu^2_{\mix}(x^*)
    \end{align*}

    Since $x^*$ lies in the vicinity of $\mcD_k$, we have (applying Lemma \ref{lemma:obs_var}) that $\sigma^2_k(x^*) = \sigma^2_o$ (and $\mu_k(x^*) = f(x^*)$), while from Lemma \ref{lemma:prior_var} we know $\sigma^2_i(x^*) = \sigma^2_p(x^*) > \sigma^2_o$ (and $\mu_i(x^*) = 0$) for all $i \neq k$.
    
    \begin{align*}
        \sigma^2_{\mix}(x^*) &= \frac{|\mcD_k|}{|\mcD|} (\sigma^2_o + f^2(x^*)) \\&~~+ \sum_{i\neq k} \frac{|\mcD_i|}{|\mcD|} (\sigma^2_p(x^*)) - \mu^2_{\mix}(x^*) \\
                            &= \frac{|\mcD_k|}{|\mcD|} \sigma^2_o  + \sigma^2_p(x^*) \Big(1 - \frac{|\mcD_k|}{|\mcD|}\Big) + \frac{|\mcD_k|}{|\mcD|} f^2(x^*) \\&~~- \frac{|\mcD_k|^2}{|\mcD|^2}f^2(x^*) \\
                            &= \frac{|\mcD_k|}{|\mcD|} \sigma^2_o  + \sigma^2_p(x^*) \Big(1 - \frac{|\mcD_k|}{|\mcD|}\Big) \\&~~ + (\frac{|\mcD_k|}{|\mcD|} - \frac{|\mcD_k|^2}{|\mcD|^2})f^2(x^*)\\
                            &> \frac{|\mcD_k|}{|\mcD|} \sigma^2_o  + (\sigma^2_o + k(x^*, x^*)) \Big(1 - \frac{|\mcD_k|}{|\mcD|}\Big) \\
                            &= \sigma^2_o + k(x^*, x^*) \Big(1 - \frac{|\mcD_k|}{|\mcD|}\Big) \\
                            &> \sigma^2_o.
    \end{align*}
    In other words, we overestimate $\sigma^2_o$ by a constant of at least $k(x^*, x^*) \big(1 - \frac{|\mcD_k|}{|\mcD|}\big) $.
\end{proof}

We note that in this case, the predictive mean of the model is also inaccurate since $\mu_{\mix}(x^*) = \frac{|\mcD_k|}{|\mcD|} f(x^*) \neq f(x^*)$. 

\subsection{Classification Analysis}

For classification, the following theorems determine the calibration performance of the BCM and mixture models (assuming a uniform prior). We make use of the fact that for the predicted class $p_T(c|x^*) > p_T(i|x^*)$, for $i \neq c$ (ie. it has the highest probability).

\BCMHomoClass*

    \begin{proof}
        Let $p_i =  p_T(y=i|x^*, \mcD)$ denote the true class probabilities. Since the local models are well-calibrated, each local model predicts $p(c|x^*, \mcD_i) = p_c$. The BCM prediction for the correct class is $p_{\bcm}(c|x^*,\mcD) \propto p^m_c$. We have:

        \begin{align*}
            p_{\bcm}(c|x^*, \mcD) &= p^m_c / (\sum_i p^m_i) \\
                                  &= 1 / (\sum_i (\frac{p_i}{p_c})^m) \\
                                  &> 1/(\sum_i \frac{p_i}{p_c}) \\
                                  &= p_c\\
                                  &= p_T(c|x^*, \mcD).
        \end{align*}  
        The inequality in line 3 above follows from the fact that $p_i < p_c$ for the correct class $c$, so that $p_i/p_c < 1$ and $(p_i/p_c)^m < p_i/p_c$.
    \end{proof}

    \BCMHeteroClass*
    
    \begin{proof}
        We have that for $x^*$ near $\mcD_k$, $p(c|x^*,\mcD_k) = p_T(c|x^*,\mcD)$ while for other shards $i\neq k$ $p(c|x^*,\mcD_i) = p_P(c|x^*) = 1/K$:
        \begin{align*}
            p_{\bcm}(c|x^*, \mcD) &= \frac{1}{p^{m-1}(c|x^*)}\prod_i p(c|x^*,\mcD_i) \\
            &= K^{m-1} p_T(c|x^*,\mcD) \prod_{i\neq k} \frac1K \\
            &= \frac{K^{m-1}}{K^{m-1}} p_T(c|x^*,\mcD) \\
            &= p_T(c|x^*,\mcD).
        \end{align*}
    \end{proof}

\MixtureHeteroClass*

    \begin{proof}
        We have that for $x^*$ near $\mcD_k$, $p(c|x^*,\mcD_k) = p_T(c|x^*,\mcD)$ while for other shards $i\neq k$ $p(c|x^*,\mcD_i) = p_P(c|x^*) = 1/K$. Letting $p_T(c|x^*,\mcD) = p_c$ and noting that $p_c > \frac{1}{K}$ (since otherwise the data point $x^*$ has no correct label):
    
        \begin{align*}
            p_{\mix}(c|x^*,\mcD) &= \sum_i \frac{|\mcD_i|}{|\mcD|} p(c|x^*,\mcD_i) \\
            &= \frac{|\mcD_k|}{|\mcD|} p_T(c|x^*,\mcD) + (1 - \frac{|\mcD_k|}{|\mcD|}) p_P(c|x^*) \\
            &=  \frac{|\mcD_k|}{|\mcD|} p_c + (1 - \frac{|\mcD_k|}{|\mcD|}) \frac{1}{K} \\
            &<   \frac{|\mcD_k|}{|\mcD|} p_c + (1 - \frac{|\mcD_k|}{|\mcD|}) p_c \\
            &= p_c.
        \end{align*}
    \end{proof}

    \MixtureHomoClass*
    \begin{proof}
        For each local dataset, $p(c|x^*,\mcD_i) = p_T(c|x^*,\mcD)$, so that:

        \begin{align*}
             p_{\mix}(c | x^*, \mcD)  &= \sum_i \frac{|\mcD_i|}{|\mcD|} p(c|x^*,\mcD_i) \\
             &= \sum_i \frac{|\mcD_i|}{|\mcD|} p_T(c|x^*,\mcD) \\
                                      &= p_T(c | x^*, \mcD).
        \end{align*}
    \end{proof}

\section{Additional Experimental Details}

Additional details for the experiments are provided below. 

\subsection{Hardware, Software, and Randomization Details}

The code for experiments was written in the Python language (version 3.8.10), 
primarily using the Pytorch (version 1.9.0), Numpy (version 1.19.5) and Scipy (version 1.6.2) libraries. Randomization was done by setting seeds for Pytorch and Numpy.

Experiments were carried out on a compute cluster using a single Nvidia GPU (either the T4, or P100).

\subsection{Expected Calibration Error}

Assuming the predicted class probabilities are organized into $M$ equally spaced bins $B_1, ..., B_M$ on the interval $[0,1]$, the expected calibration error is calculated as:
\begin{align}
    \sum^M_{m=1} \frac{|B_m|}{|\mcD|} \large| \mathrm{acc}(B_m) - \mathrm{conf}(B_m) \large|,
\end{align}
where $\mathrm{acc}(B_m)$ is the average accuracy of the predictions in bin $m$ (the fraction of points in $B_m$ classified correctly) and $\mathrm{conf}(B_m)$ is the average confidence of the predictions in bin $m$ (the average value of $p(c|x^*,\mcD)$, where $c$ denotes the predicted class).
For our experiments, we calculated ECE using 15 equally spaced bins.

\subsection{Models} A two-layer fully connected network with 100 hidden units was used for MNIST, Fashion MNIST, EMNIST, as well as the regression datasets. For the CIFAR10 and CIFAR100 datasets a Convolutional Neural network was used with 3 convolution layers, each followed by ``Max Pooling" layers, with a single fully connected layer at the end. For all networks, the ReLU activation function was used between layers. For classification, the output was the predictive distribution over classes, while for regression the output was the mean of the predictive distribution. A Gaussian prior is assumed over the network parameters, $p(\theta) = \mathcal{N}(0, \sigma^2 I)$, with $\sigma = \texttt{5e4}$.

\subsection{Baselines} The Federated techniques compared include: Federated Averaging (FedAvg, \citealt{mcmahan17a}, Federated Posterior Averaging (FedPA, \citealt{al-shedivat2021federated}, Embarrassingly Parallel MCMC (EP MCMC, \citealt{ep_mcmc}, FedProx \citep{li2020fedprox}, Adaptive FL \citep{reddi2021adaptive}, Federated Bayesian Ensemble (FedBE) \citep{chen2021fedbe}, One Shot Federated Learning (OneshotFL, \citealt{guha2019}, in addition to the BCM \cite{tresp2000bcm} and the mixture model. 

In the case of FedAvg, FedProx, FedBE, and OneshotFL, either SGD with momentum or the Adam optimizer~\citep{adam} was used for local optimization (as selected by a grid search). For the rest of the methods, including our own, since they require MCMC sampling, cyclic stochastic gradient Hamiltonian Monte-Carlo (cSGHMC, \citealt{zhang2020csgmcmc}) was used. For EP MCMC, the algorithm was computationally intractable for neural network models due to the calculation of the inverse of a covariance matrix over parameters. Thus a diagonal covariance matrix was assumed (which corresponds to the assumption that the local posteriors are approximated by an axis-aligned Gaussian). All methods were run for a single round.

\subsection{Training Details} 

For MNIST, Fashion MNIST and EMNIST, the training was run for 25 epochs per client overall (split into 5 rounds for FedAvg and FedPA, while only run in a single round for the rest). For CIFAR10 and CIFAR100, training was run for 50 epochs per client (split into 10 rounds for multi-round methods). For the regression datasets, a total of 20 epochs are used for all datasets except ``air quality", which used 100 epochs (in both cases, divided into 5 rounds for multi-round methods). The methods involving sampling used a maximum of 6 samples for all experiments.

\subsection{Hyperparameter Tuning}

Hyperparameters were selected based on searching a grid for the best performing settings according to the validation set performance (accuracy for classification, and mean squared error for regression). 

The hyperparameters tuned, and their corresponding grids are outlined in Table \ref{tab:hparam_grid} for both classification and regression. Note that FedPA requires a sampler at each client, and an optimizer at the server. To distinguish where each hyperparameter is used for this algorithm, we therefore label these cases FedPA(C) and FedPA(S) respectively. 

In the tables, (D-)$\beta$-PredBayes denotes the (distilled) $\beta$-predictive Bayes algorithm.

The optimizers used for client training include SGD, SGD with momentum (SGDM), and Adam, while for distillation, we also used Stochastic Weight Averaging (SWA) as a possible optimizer.

The tuned hyperparameter settings for the homogeneous ($h=0$) classification datasets are in Table \ref{tab:tune_class}, while for the heterogeneous setting $h>0$ they  are in Table \ref{tab:tune_class_het}. For the regression datasets, the tuned values are in Table \ref{tab:tune_regr}.

\textbf{Note about reading tables}: for these tables, if a hyperparameter is repeated more than once, with an algorithm named beside it in brackets, it means the hyperparameter for that algorithm is different. The rest of the algorithms associated with that hyperparameter use the value listed without brackets. For instance, in Table \ref{tab:tune_regr}, for the ``Bike" dataset, the sampler learning rate is listed in the row ``Sampler Learning Rate" as \texttt{2e-1}, while it is listed separately with the additional specification ``FedPA (C)", as \texttt{5e-1}. This means that FedPA uses a sampler learning rate of \texttt{5e-1}, while the other sampling algorithms use \texttt{2e-1}.  \\

Other hyperparameters not part of the grid search include:

\begin{itemize}
    \item Batch size: fixed to $100$ for all experiments
    \item Momentum in SGDM: fixed to $0.9$ for all experiments
    \item Model architecture (as described in the main paper)
\end{itemize}

More algorithm-specific decisions/hyperparameters include:
\begin{itemize}
    \item FedBE: $10$ model samples were drawn from the approximate posterior to use in the ensemble for all experiments (following the experiments in the original paper. This gave a total ensemble size of $16$ models $= 10$ (sampled) + $5$ (client models) + $1$ (averaged model). By contrast $\beta$-predBayes contained an ensemble with $5$ models.
    \item Adaptive FL: The FedYogi server update was used, based on the results from, which suggested that it performed best among their proposed variants. $\beta_1 = 0.9$ and $\beta_2 = 0.99$ were fixed, again, based on the paper.
    \item One-Shot FL: For the classification case, aggregation is done by averaging the logits of the client models. (This is opposed to averaging and normalizing the probabilities, after the softmax layer). 
\end{itemize}

In tables \ref{tab:hparam_grid}, \ref{tab:tune_class}, \ref{tab:tune_class_het} and \ref{tab:tune_regr}, LR is used to denote learning rate.

\begin{table*}[hb]
    \centering
    \scalebox{0.9}{
    \begin{tabular}{lccc}
        \toprule
        \multirow{2}{*}{Hyperparameter}  & \multicolumn{2}{c}{Grid Settings} & \multirow{2}{*}{Algorithms Used In} \\
        \cmidrule(lr){2-3}
        & Classification & Regression & \\ 
        \midrule
        Optimizer & \multicolumn{2}{c}{\{SGD, SGDM, Adam\}} & \makecell{FedAvg, OneshotFL \\ FedPA(S), FedProx \\ AdaptFL, FedBE}  \\ \midrule 
        Local LR & \{\texttt{1e-1, 1e-2, 1e-3}\} &  \{\texttt{1e-1, 1e-2, 1e-3, 1e-4}\} & \makecell{FedAvg, OneshotFL,\\ FedProx, AdaptFL, FedBE} \\ \midrule
        Server LR & \multicolumn{2}{c}{\{\texttt{1, 5e-1, 1e-1, 1e-2}\}} & FedPA(S), AdaptFL \\ \midrule
        Cov. Param ($\rho$) & \multicolumn{2}{c}{\{\texttt{0.4, 0.9, 1.0}\}} & FedPA(C) \\ \midrule
        Proximal Parameter ($\lambda$) & \multicolumn{2}{c}{\{\texttt{1, 1e-1, 1e-2, 1e-3}\}} & FedProx \\ \midrule 
        Adaptivity ($\tau$) & \multicolumn{2}{c}{\{\texttt{1, 1e-1, 1e-2, 1e-3}\}} & AdaptFL \\ \midrule 
        Sampler LR & \small{\{\texttt{5e-1, 1e-1, 1e-2,1e-3}\}} & \small{\{\texttt{5e-1, 2e-1, 1e-1, 1e-2, 1e-3}\}} & \makecell{(D)PB, EPMCMC,\\ FedPA(C)} \\ \midrule 
        Maximum Samples & \multicolumn{2}{c}{\{\texttt{4,6,12}\}} & \makecell{(D)PB, EPMCMC,\\ FedPA(C)} \\ \midrule 
        Temperature & \{$\frac{1}{|\mcD|}$\} & \{\texttt{1, 5e-1, 5e-2}, $\frac{1}{|\mcD|}$ \} & \makecell{(D)PB, EPMCMC,\\ FedPA(C)} \\ \midrule
        Sampler Cycles & \{\texttt{5}\} & \{\texttt{2, 4,5}\} & \makecell{(D)PB, EPMCMC, \\ FedPA(C)} \\ \midrule
        Samples per Cycle & \{\texttt{2}\} & \{\texttt{1,2,3}\} & \makecell{(D)PB, EPMCMC, \\ FedPA(C)} \\ \midrule
        Distill Optimizer & \multicolumn{2}{c}{\{SGDM, Adam, SWA\}} & \makecell{D-PB, OneshotFL,\\ FedBE}  \\ \midrule
        Distill LR & \{\texttt{1e-2, 5e-3, 1e-4}\} &  \{\texttt{1e-2, 5e-3, 1e-3, 1e-4}\} & \makecell{D-PB, OneshotFL,\\ FedBE} \\ \midrule 
        Distill Epochs & \multicolumn{2}{c}{\{\texttt{100, 50, 20}\}} & \makecell{D-PB, OneshotFL,\\ FedBE} \\
        \bottomrule
    \end{tabular}
    }
    \caption{The hyperparameters tuned, their possible values in the grid search, and the algorithms each hyperparameter applies to. } 
    \label{tab:hparam_grid}
\end{table*}

\newcommand{\mc}[1]{\multicolumn{2}{c}{#1}}
\begin{table*}[ht]
    \centering
   \scalebox{0.8}{
    \begin{tabular}{lccccc}
        \toprule
        \multirow{2}{*}{Hyperparameter}  & \multicolumn{5}{c}{Tuned Value}  \\
        \cmidrule(lr){2-6} 
        & MNIST & Fashion MNIST & EMNIST & CIFAR10 & CIFAR100  \\
        \midrule
        Optimizer & \multicolumn{5}{c}{SGDM} \\ \midrule
        Optimizer (FedProx) & SGDM & Adam & Adam & SGDM & SGDM \\ \midrule
        Optimizer (AdaptFL) & SGDM & SGDM & SGDM & SGD & SGD \\ \midrule
        Local LR &\tt 1e-1 &\tt  1e-1 &\tt  1e-1 & \tt 1e-2 &\tt  1e-2  \\ \midrule
        Local LR (FedProx) & \tt 1e-1 & \tt 1e-3 & \tt 1e-3 &\tt  1e-2 & \tt 1e-2  \\ \midrule
        Local LR (AdaptFL) &\tt  1e-2 &\tt  1e-2 &\tt  1e-2 &\tt  1e-1 & \tt 1e-1  \\ \midrule
        Server LR &\tt  1 & \tt 5e-1 &\tt  1e-1 & \tt 5e-1 &\tt  5e-1  \\ \midrule
        Server LR (AdaptFL) & \multicolumn{5}{c}{\tt 1e-1}  \\ \midrule
        Cov. Param ($\rho$) & \multicolumn{5}{c}{\tt 0.4} \\\midrule 
        Proximal Param ($\lambda$) & \tt 1e-2 & \tt 1e-3 & \tt 1e-3 &\tt  1e-3 & \tt 1e-3 \\\midrule
        Adaptivity ($\tau$) & \multicolumn{5}{c}{1e-2} \\\midrule 
        Sampler LR &\tt  5e-1 &\tt  1e-1 &\tt  1e-1 & \tt 1e-1 &\tt  1e-1 \\ \midrule 
        Sampler LR (FedPA(C), EP MCMC) & \multicolumn{5}{c}{\tt 1e-1} \\ \midrule 
        Maximum Samples & \multicolumn{5}{c}{\tt 6}  \\ \midrule 
        Temperature & \multicolumn{5}{c}{$\frac{1}{|\mcD|}$} \\ \midrule
        Sampler Cycles & \multicolumn{5}{c}{\tt 5} \\ \midrule
        Samples per cycle & \multicolumn{5}{c}{\tt 2}  \\ \midrule
        Distill Optimizer & \multicolumn{5}{c}{Adam}  \\ \midrule
        Distill LR & \multicolumn{5}{c}{\tt 1e-4} \\ \midrule 
        Distill Epochs & \multicolumn{5}{c}{\tt 100} \\
        \bottomrule
    \end{tabular}
    }
    \caption{The tuned values of hyperparameters for the classification datasets in the homogeneous case $h=0$.} 
    \label{tab:tune_class}
\end{table*}

\begin{table*}[ht]
    \centering
     \scalebox{0.8}{
    \begin{tabular}{lccccc}
        \toprule
        \multirow{2}{*}{Hyperparameter}  & \multicolumn{5}{c}{Tuned Value}  \\
        \cmidrule(lr){2-6}
        & MNIST & Fashion MNIST & EMNIST & CIFAR10 & CIFAR100  \\ 
        \midrule
        Optimizer & \multicolumn{5}{c}{SGDM} \\ \midrule
        Optimizer (FedProx) & Adam & Adam & Adam & SGDM & SGDM \\ \midrule
        Optimizer (AdaptFL) & \multicolumn{5}{c}{SGD} \\ \midrule
        Local LR & \tt 1e-2 &\tt  1e-2 & \tt 1e-3 & \tt 1e-3 & \tt 1e-3  \\ \midrule
        Local LR (FedProx) & \tt 1e-3 &\tt  1e-3 & \tt 1e-3 &\tt  1e-2 &\tt  1e-2  \\ \midrule
        Local LR (AdaptFL) & \multicolumn{5}{c}{1e-1}  \\ \midrule
        Server LR &\tt  1 &\tt  5e-1 &\tt  1e-1 & \tt 5e-1 &\tt  5e-1  \\ \midrule
        Server LR (AdaptFL) & \multicolumn{5}{c}{\tt 1e-1}  \\ \midrule
        Cov. Param ($\rho$) & \multicolumn{5}{c}{\tt 0.4} \\\midrule 
        Proximal Param ($\lambda$) &\tt  1e-2 & \tt 1e-2 &\tt  1e-2 &\tt  1e-3 & \tt 1e-3\\\midrule
        Adaptivity ($\tau$) & \multicolumn{5}{c}{\tt 1e-2} \\\midrule 
        Sampler LR & \multicolumn{5}{c}{\tt 1e-1} \\ \midrule 
        Maximum Samples & \multicolumn{5}{c}{\tt 6}  \\ \midrule 
        Temperature & \multicolumn{5}{c}{$\frac{1}{|\mcD|}$} \\ \midrule
        Sampler Cycles & \multicolumn{5}{c}{\tt 5} \\ \midrule
        Samples per cycle & \multicolumn{5}{c}{\tt 2}  \\ \midrule
        Distill Optimizer & \multicolumn{5}{c}{Adam}  \\ \midrule
        Distill LR & \multicolumn{5}{c}{\tt 1e-4} \\ \midrule 
        Distill Epochs & \multicolumn{5}{c}{\tt 100} \\
        \bottomrule
    \end{tabular}
    }
     \caption{The tuned values of hyperparameters for the classification datasets, in the heterogeneous case $h>0$.} 
    \label{tab:tune_class_het}
\end{table*}

\begin{table*}[ht]
    \centering
     \scalebox{1.0}{
    \begin{tabular}{lccccc}
        \toprule
        \multirow{2}{*}{Hyperparameter}  & \multicolumn{5}{c}{Tuned Value}  \\
        \cmidrule(lr){2-6}
        & Air Quality & Bike & Wine Quality & Real Estate & Forest Fire  \\ 
        \midrule
        Optimizer & \multicolumn{5}{c}{Adam} \\ \midrule
        Optimizer (FedPA(S)) & SGDM & Adam & SGDM & SGDM & SGDM \\ \midrule 
        Optimizer (FedProx) & SGDM & Adam & Adam & Adam & SGDM \\ \midrule 
        Optimizer (AdaptFL) & SGDM & SGDM & SGD & SGD & SGD \\ \midrule 
        Local LR &\tt  1e-2 &\tt  1e-2 & \tt 1e-3 & \tt 1e-2 &\tt  1e-4 \\ \midrule
        Local LR (FedProx) &\tt  1e-2 &\tt  1e-2 & \tt 1e-2 &\tt  1e-1 &\tt  1e-1 \\ \midrule
        Local LR (AdaptFL) &\tt  1e-2 &\tt  1e-1 & \tt 1e-1 &\tt  1e-1 &\tt  1e-2 \\ \midrule
        Server LR &\tt  1e-1 &\tt  1e-2 &\tt  5e-1 &\tt  1 &\tt  1 \\ \midrule
        Server LR (AdaptFL) & \tt 1e-1 & \tt 1e-1 & \tt 1 & \tt 1e-1 &\tt  1 \\ \midrule
        Cov. Param ($\rho$) & \tt 0.4 &\tt  1.0 & \tt 0.9 & \tt 0.4 & \tt 0.4 \\ \midrule
        Proximal Param ($\lambda$) &\tt  1e-2 & \tt 1e-3 &\tt  1e-3 & \tt 1e-2 & \tt 1e-1 \\ \midrule
        Adaptivity ($\tau$) & \tt 1e-2 & \tt 1e-1 &\tt  1 &\tt  1e-2 & \tt 1e-2 \\ \midrule
        Sampler LR & \tt 1e-1 &\tt  2e-1 &\tt  2e-1 &\tt  2e-1 &\tt  1e-2 \\ \midrule 
        Maximum Samples & \tt 6 & \tt 4 &\tt  4 &\tt  6 &\tt  4 \\ \midrule 
        Temperature & \tt 1 & $\frac{1}{|\mcD|}$ & \tt 5e-2 & \tt 5e-1 &\tt  5e-1 \\ \midrule
        Sampler Cycles &\tt  5 &\tt  5 & \tt 4 &\tt  5 & \tt 2 \\ \midrule
        Samples per cycle & \tt 1 &\tt  2 & \tt 2 & \tt 2 & \tt 2 \\ \midrule
        Distill Optimizer & \multicolumn{5}{c}{Adam}  \\ \midrule
        Distill LR & \tt 1e-3 &\tt  1e-3 &\tt  5e-3 &\tt 5e-3 & \tt 5e-3 \\ \midrule 
        Distill Epochs & \multicolumn{5}{c}{\tt 100} \\
        \bottomrule
    \end{tabular}
    }
     \caption{The tuned values of hyperparameters for the regression datasets.} 
    \label{tab:tune_regr}
\end{table*}

\subsection{Heterogeneous Classification Dataset Construction}

The process for constructing a heterogeneous classification dataset is as follows: 

\begin{itemize}
    \item A parameter $h \in [0,1]$ is fixed.
    \item The dataset is sorted by class labels, and split evenly into shards for each client (the ``fully heterogeneous shards").
    \item A copy of the dataset is made and split such that each shard contains a roughly uniform class distribution for each client (the ``homogeneous shards").
    \item To form the final shard for a client, a fraction $h$ of each homogeneous shard is replaced with the data from the corresponding fully-heterogeneous shard.  
\end{itemize}

In this way, $h=0$ corresponds to the homogeneous data setting, while $h=1.0$ corresponds to a degenerate heterogeneous case (for class distributions in each client). 

\subsection{Regression Models}

A note on implementation for regression: the distilled student model is the same architecture as the client models, except in the last layer where it is made to output both a mean and an input-dependent variance. This network is trained to minimize the KL divergence between its output Gaussian distribution, and that of the teacher network. 

Additionally, for regression, only the baselines which provide an ensemble are evaluated (since the ensemble is used to predict the variance estimate of the output, $\sigma^2(x^*)$).

\subsection{Training $\beta$}

We choose $\beta$ in $\beta$-predBayes to minimize the negative log-likelihood on the server dataset $\mathcal{U}$. This is done using 10 epochs of gradient descent, using the Adam optimizer with a learning rate of \texttt{1e-2} (for all experiments).

\end{document}